\documentclass[letterpaper, 10 pt, conference]{ieeeconf}

\IEEEoverridecommandlockouts
\usepackage{amsmath}
\usepackage{amsfonts}
\usepackage{amssymb}
\usepackage{xcolor}
\usepackage{bbold}
\usepackage{graphicx}
\usepackage{tabularx}
\usepackage{ulem}
\usepackage{mathtools}
\usepackage{verbatim}
\usepackage{lipsum}

\newtheorem{theorem}{Theorem}
\newtheorem{lemma}[theorem]{Lemma}
\newtheorem{definition}[theorem]{Definition}
\newtheorem{proposition}[theorem]{Proposition}

\newtheorem{corollary}[theorem]{Corollary}

\title{\LARGE \bf
Optimally Trading Energy for Information through Speed Trajectory Optimization of Sustainably Powered Mobile Robots
}

\author{Kavin M. Govindarajan, Dimitra Panagou, and Chris Vermillion%
\thanks{Supported by NSF Award Numbers 2223844 and DGE 2241144.}%
\thanks{This work has been submitted to the IEEE for possible publication. Copyright may be transferred without notice, after which this version may no longer be accessible.}%
\thanks{Kavin M. Govindarajan is a Graduate Research Assistant with the Department of Robotics, University of Michigan, Ann Arbor, MI 48109, USA.     {\tt\small kmgovind@umich.edu}}%
\thanks{Dimitra Panagou is an Associate Professor with the Department of Robotics, University of Michigan, Ann Arbor, MI, 48109, USA. {\tt\small dpanagou@umich.edu}}%
\thanks{Chris Vermillion is an Associate Professor with the Department of Mechanical Engineering, University of Michigan, Ann Arbor, MI, 48109, USA. {\tt\small cvermill@umich.edu}}%
}

\begin{document}

\maketitle
\thispagestyle{empty}
\pagestyle{empty}

\begin{abstract}
Sustainably powered mobile robots enable long-duration observational missions by harvesting energy from the environment. However, effective autonomy requires the robots to make decisions as to how to allocate this energy to maximize information quality over time. In this paper, we study a persistent (infinite-horizon) speed trajectory optimization problem that is posed as a periodic optimal control problem in which a robot repeatedly traverses a fixed closed path, coupling a super-linear propulsion cost with dynamic information gain and decay. Using the information-theoretic quantity of \textit{clarity} as an information metric, we first derive the necessary conditions for optimal control. We then derive a number of specialized follow-on properties, which characterize (i) the nature of the optimal control when points on the path are of varying importance and when they are of equal importance, along with (ii) a high-speed information limit that is approached when energy reserves approach infinity. We illustrate the information-optimal speed trajectories on a simple example in order to showcase key trends that are followed under the optimized speed profile. This is followed by a simulation study based on the persistent characterization of salinity in the Chesapeake Bay, illustrating the practical merit of applying the results from this work.
\end{abstract}

\section{Introduction} \label{sec:introduction}
The collection of spatiotemporally varying data is critical for environmental and engineering applications, from characterizing ocean currents for climate modeling~\cite{muglia2020gulf, mao2023data} to mapping wind speeds for offshore wind farm siting~\cite{abdelsalam2018optimization, cetinay2017optimal}. Because these tasks require spatially expansive and temporally persistent measurements, sustainably powered mobile robots---such as solar-powered autonomous surface vessels (ASVs)~\cite{seatrac} and wind-driven sailing drones~\cite{gentemann2020saildrone}---have become increasingly attractive. 

Unlike traditional finite-horizon missions, persistently powered robots must continuously balance \textit{energy inflow} (harvesting) with \textit{energy outflow} (propulsion and sensing). Similarly, a central premise behind persistent missions is the presence of \textit{information gain} resulting from observations and \textit{information decay} resulting from locations going unvisited over time. We view this interaction through a coupled ``reservoir'' lens (Fig.~\ref{fig:reservoir-diagram}). Harvested energy fills a physical battery reservoir, and this energy is converted into information, which possesses its own virtual reservoir. Because environmental information continuously decays due to forecast drift or measurement staleness~\cite{smith2011persistent}, long-term monitoring requires the robot to repeatedly revisit and refresh informative regions.

\begin{figure}
    \centering
    \includegraphics[width=\columnwidth]{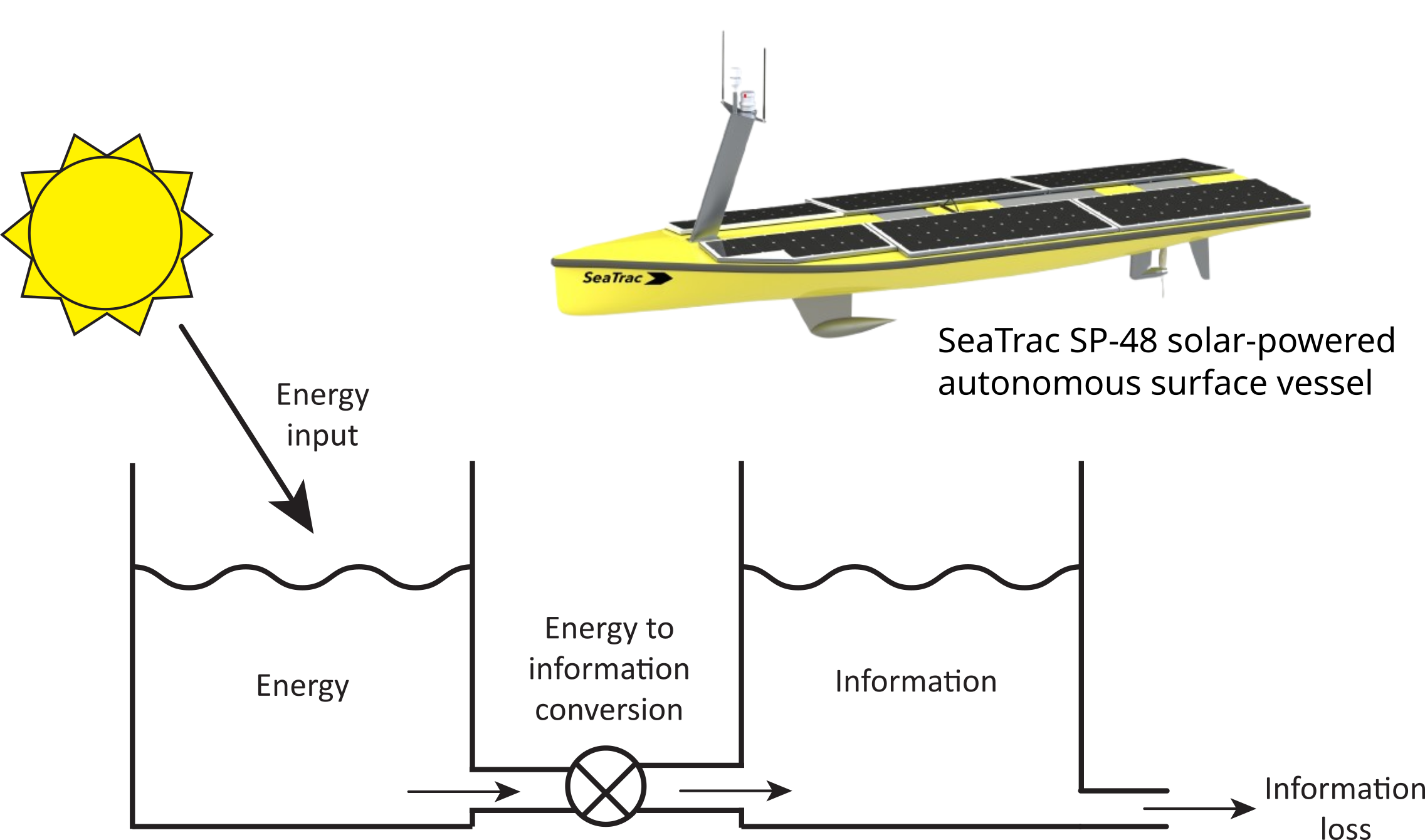}
    \caption{Sustainably powered mobile robots, such as the pictured SeaTrac SP-48 solar-powered autonomous surface vessel (image credit: SeaTrac; used with permission), can be modeled as coupled energy and information reservoirs. Energy is harvested from the environment and expended on propulsion and sensing. Sensing converts energy into information, which accumulates but also decays over time, and this conversion is regulated by the robot's speed profile and represented as a virtual valve between the two reservoirs.}
    \label{fig:reservoir-diagram}
\end{figure}

While a large body of literature addresses informative path planning~\cite{bai2021information, popovic2024learning} and energy-aware routing~\cite{plonski2013energy, diller2023energy}, these frameworks often model missions as resource-constrained traveling salesperson problems~\cite{vansteenwegen2011orienteering}. Such routing formulations treat sensing as a discrete event collected at isolated nodes. Conversely, continuous approaches like ergodic search optimize trajectories over time-varying information maps~\cite{dong2025time}, but typically rely on finite-horizon optimization formulations. However, for a mobile sensor accruing continuous rewards, a discrete objective is insufficient, since a measurement taken at any point along a path affects not only the evolution of information at that point on the path but also the evolution of information at every other location. 

To explicitly capture these dynamics, we leverage the quantity of \textit{clarity}~\cite{agrawal2023sensor}, which describes the state of certainty about a particular point at a particular time and evolves according to an ODE containing both information gain and decay terms. Coupled with a standard nonlinear propulsion model (e.g., drag power proportional to speed cubed~\cite{fossen2011handbook}), the fundamental control problem shifts from discrete routing to computing a continuous speed trajectory that maximizes a reward functional that characterizes information over the entire path at each point in time.

Specifically, we consider a mobile robot that is tasked with optimizing its speed profile over a prescribed path, given limited available power, (generally) unequal importance allocated to certain portions of the path relative to others, and periodicity constraints on final vs. initial energy (robot state of charge) and clarity (information) states. The theoretical results center around the derivation and interpretation of necessary conditions for optimality under this problem formulation. After deriving these conditions, we present several properties associated with the nature of the optimal control strategy, effect of unequal importance weighting along the mission path, and limiting performance with respect to both the importance weighting and available power.

The theoretical results in this work are complemented by two simulation studies. The first study represents an illustrative ``toy'' example that shows how the robot's speed profile varies according to the importance of information at different points along the path. The second study focuses on using a solar-powered autonomous surface vessel (ASV) to characterize salinity information in the Chesapeake Bay. Results of the simulation study illustrate the value of modifying the ASV's speed profile by solving the optimization problem presented in this work.

In summary, the contributions of this paper are as follows:
\begin{enumerate}
    \item A modeling framework coupling clarity dynamics with nonlinear energy expenditure;
    \item An optimal control formulation for periodic sensing missions derived via Pontryagin's Maximum Principle, yielding necessary conditions for optimality;
    \item Analytical results establishing several properties of the optimal control solution;
    \item Numerical validation via IPOPT, demonstrating optimal speed profiles that slow in high-information regions and accelerate through low-value regions;
    \item A simulation case study for salinity monitoring in the Chesapeake Bay, showcasing the utility of the work in a realistic application domain.
\end{enumerate}

\section{Preliminaries}\label{sec:preliminaries}
\subsection{Information Model}\label{sec:preliminaries:info-model}
In this work, we characterize information using \textit{clarity}, an information-theoretic metric introduced in~\cite{agrawal2023sensor}. Clarity is a normalized mapping $q: \mathcal{D} \times \mathbb{R}_{\geq 0} \to [0,1]$, where $q = 0$ denotes maximum uncertainty and $q=1$ denotes perfect knowledge in a domain $\mathcal{D}$. For a distribution characterizing a scalar field of interest at spatial location $p \in \mathcal{D} \subset \mathbb{R}^2$ and time $t \in \mathbb{R}_{\geq 0}$, clarity is related to the state estimation variance (or posterior covariance) $\Sigma(p,t) \in \mathbb{R}_{\geq 0}$ by:
\begin{equation}\label{eqn:clarity-covariance}
    q(p,t) = \frac{1}{1 + \Sigma(p,t)}.
\end{equation}

As derived in~\cite{agrawal2023sensor}, the clarity dynamics at a specific location $p$ follow the ordinary differential equation:
\begin{equation}\label{eqn:clarity-dynamics}
    \dot q(p, t) = S(p_r(t), p) (1 - q(p, t))^2 - \alpha q(p, t)^2,
\end{equation}
where $\alpha > 0$ is a decay constant representing natural information loss due to environmental dynamics, and $p_r(t) \in \mathcal{D}$ denotes the robot's position at time $t$. The term $S: \mathcal{D} \times \mathcal{D} \to \mathbb{R}_{\geq 0}$ represents the spatial sensing profile. Often (and in the case studies of this work), $S(p_r, p)$ is modeled as a Gaussian function of the distance between the robot and the target location $p$~\cite{agrawal2023sensor}:
\begin{equation}\label{eqn:sensing_model}
    S(p_r, p) = S_0 \exp\left(-\frac{\|p_r - p\|^2}{2\sigma^2}\right),
\end{equation}
where $S_0 \in \mathbb{R}_{\geq 0}$ is the maximum sensing rate and $\sigma \in \mathbb{R}_{>0}$ is a spatial length scale encoding the sensor's effective range.

\subsection{Energy Model}\label{sec:preliminaries:energy-model}
In general, we express the robot's energy model as:
\begin{equation}\label{eqn:general-energy-dynamics}
    \dot{b}(t) = P_{\mathrm{in}}(t) - P_{\mathrm{out}}(u(t)),
\end{equation}
where $b(t)$ is the battery state of charge at time $t$, $P_{\mathrm{in}}(t)$ is the time-varying power input from energy harvesting, and $P_{\mathrm{out}}(u(t))$ is the power consumption as a function of the robot's speed $u(t)$. The specific form of $P_{\mathrm{out}}$ depends on the robotic platform and its propulsion characteristics.

For the analysis in this work, we assume $P_{\mathrm{out}} \in C^2$, $ \frac{\partial P_{\mathrm{out}}}{\partial u}(u) > 0$, and $\frac{\partial^2 P_{\mathrm{out}}}{\partial u^2}(u) > 0$ for all $u > 0$. This amounts to assuming that the power required grows monotonically and super-linearly with speed, which is a standard result for hydrodynamic and aerodynamic drag power~\cite{fossen2011handbook}. This assumption does not preclude the existence of additional ``hotel'' loads (e.g., sensing and computing loads) that are constant with respect to speed. Additionally, for the analysis in this work, we assume that the energy reserves are always sufficient to provide energy (i.e, $b(t) \geq 0$ for all times $t$).

Note that $\lVert \dot{p}_r(t) \rVert = u(t)$, i.e., the robot's speed is the norm of its velocity vector, thus linking the energy and information models through the robot's trajectory. Throughout the paper, we use $u(\cdot)$ to denote speed, $s$ to denote distance traversed along the path, and $t$ to denote time.

\section{Optimization Formulation and Theoretical Results}\label{sec:energy-optimization}
\subsection{Optimal Control Problem Formulation}\label{sec:eoc:ocp-formulation}

We consider a persistent monitoring mission where the robot traverses a closed, fixed path $\mathcal{P}$ of length $L$, parameterized by distance along the path $s \in [0, L]$. The control input at each time $t$ is the robot's speed $u(t)$. Further, we use the variable $u$ to represent the robot's entire speed trajectory. When working in the spatial domain, we use $u(s) \triangleq u(t(s))$ to denote the same control as a function of position. To allow the optimization full freedom in selecting the speed profile, we pose the optimization as a \textit{free final time} problem, where the period of one lap, $T_{\text{lap}}$, is functionally dependent on the speed profile:
\begin{equation}\label{eq:tlap_def}
    T_{\text{lap}}(u) = \int_{0}^{L} \frac{1}{u(s)} \, ds.
\end{equation}

To derive a set of necessary conditions consisting of a finite number of differential equations and boundary conditions, we transcribe the problem into a finite-dimensional state space. Note that this is also representative of how the numerical optimization is set up. We discretize the continuous path $\mathcal{P}$ into $N$ representative nodes, as shown in Fig.~\ref{fig:discretization}. We then define the clarity at node $j$ as $q_j(t) \equiv q(x_j, t)$, where $x_j$ is the position of the node, its weight as $\gamma_j \equiv \gamma(x_j) > 0, \forall j$, and the localized sensing intensity as $S_j(s) \equiv S(s, x_j)$, where $s$ is the position of the robot.

\begin{figure}
    \centering
    \includegraphics[width=0.70\columnwidth]{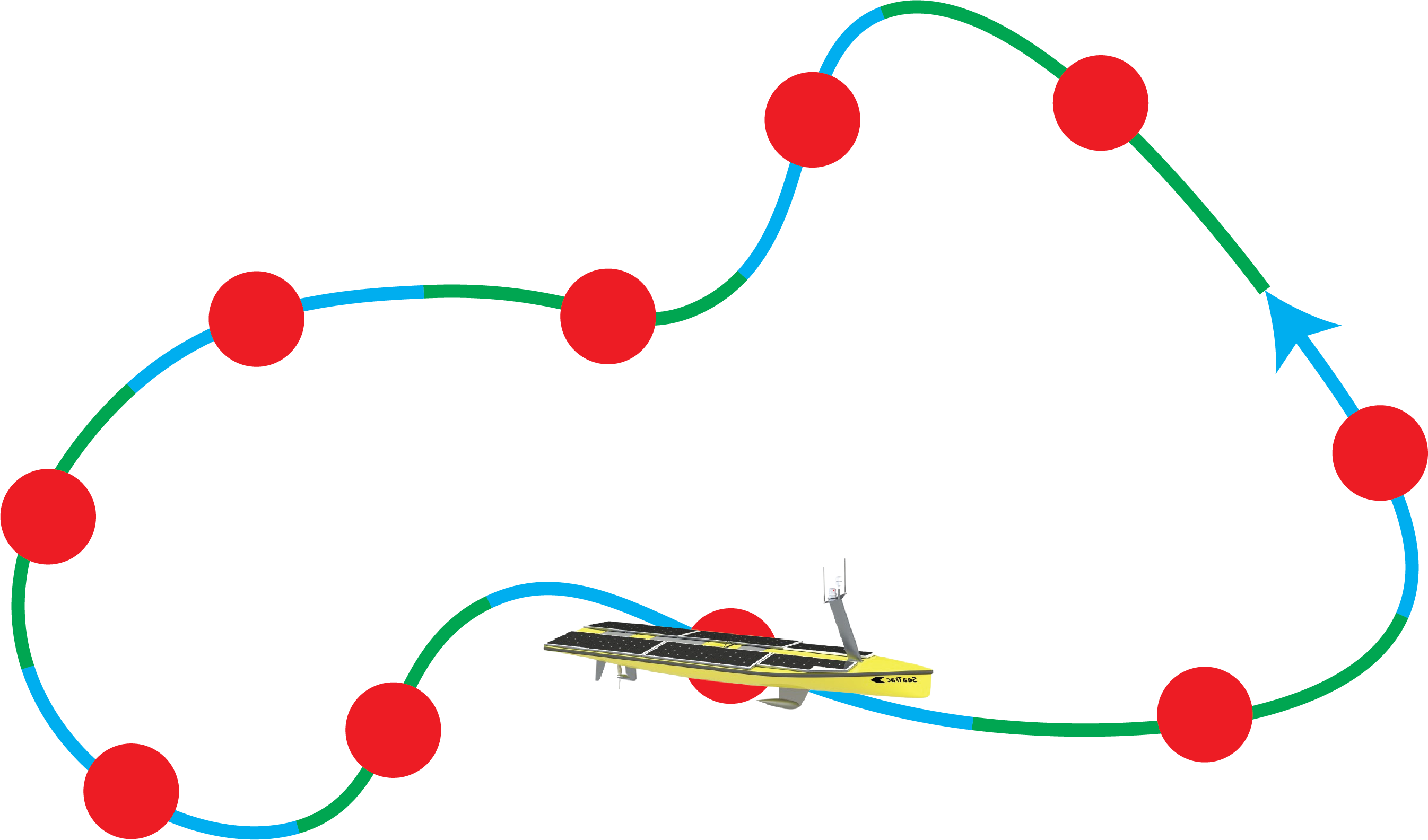}
    \caption{Illustration of the finite-dimensional transcription. The continuous path $\mathcal{P}$ is discretized into $N$ nodes (represented by red dots), each with its own clarity state $q_j(t)$ and weight $\gamma_j$.}
    \label{fig:discretization}
\end{figure}

This yields the following optimal control problem:
\begin{subequations}\label{eqn:eoc:ocp}
    \begin{align}
        \max_{u} \quad & R = \frac{1}{T_{\text{lap}}(u)} \int_{0}^{T_{\text{lap}}(u)} \sum_{j=1}^N \gamma_j q_j(t) \, dt \label{eqn:eoc:ocp:obj}              \\
        \text{s.t.} \quad & \dot{s} = u(t) \label{eqn:eoc:ocp:dynamics}\\
        & s(0)=0,\; s(T_{\text{lap}}(u)) = L \label{eqn:eoc:ocp:path-bc}\\
        & \dot{q}_j(t) = S_j(s)(1 - q_j(t))^2 - \alpha q_j(t)^2, \nonumber\\ &\quad \forall j \in \{1, \dots, N\} \label{eqn:eoc:ocp:clarity-dynamics} \\
        & q_j(0) = q_j(T_{\text{lap}}(u)), \quad \forall j \in \{1, \dots, N\} \label{eqn:eoc:ocp:periodicity} \\
        & \frac{1}{T_{\text{lap}}(u)} \int_{0}^{T_{\text{lap}}(u)} P_{\mathrm{out}}(u(t)) \, dt \leq P_{\mathrm{in,avg}}, \label{eqn:eoc:ocp:energy-constraint}
    \end{align}
\end{subequations}
where constraint~\eqref{eqn:eoc:ocp:periodicity} enforces the periodic nature of the mission, ensuring that the clarity state at the end of the lap matches that at the beginning. Similarly, constraint~\eqref{eqn:eoc:ocp:energy-constraint} ensures that the robot only consumes energy that is made available over each period, thereby enabling indefinite periodic operation. These periodicity constraints capture the persistent nature of the mission while preserving a finite-time optimal control formulation. The discretization of the path into a finite number ($N$) of nodes allows for subsequent theory to be derived for a finite-dimensional state space representation, while being applicable to arbitrarily large $N$.

\subsection{Conditions for Optimality via PMP}\label{sec:eoc:pmp}

\subsubsection{Necessary Conditions}

We derive the continuous-time necessary conditions using Pontryagin's Maximum Principle (PMP). We can rewrite the time-averaged objective of~\eqref{eqn:eoc:ocp:obj} as:
 \begin{equation}
     R = \frac{1}{L} \int_0^{T_{\text{lap}}(u)} u_{\mathrm{ave}}(u) \left(\sum_{j=1}^N \gamma_j q_j(t)\right) dt.
 \end{equation}
 \noindent where:
 \begin{equation}
 u_{\mathrm{ave}}(u) = \frac{L}{T_{\mathrm{lap}}(u)} = \frac{\int_0^{T_{\text{lap}}(u)} u(t) dt}{T_{\text{lap}}(u)}.
 \end{equation}
 \noindent Note that $u_{\mathrm{ave}}(u)$ is a single scalar determined by the entire trajectory $u(\cdot)$, it does not vary over the course of a lap.

We write the full state vector as: 
\begin{equation}
\mathbf{x}(t) = [\begin{array}{ccccc} s(t) & q_1(t) & \dots & q_N(t) & b(t) \end{array}]^{T},
\end{equation} 
\noindent with $\dot{b} = P_{\mathrm{in,avg}} - P_{\mathrm{out}}(u(t))$, the periodicity constraint $q_j(0)=q_j(T_{\text{lap}})$, and the constraints $b(0) \leq b(T_{\text{lap}})$, $s(0)=0$, and $s(T_{\text{lap}})=L$. We define the costate vector as: 
\begin{equation}
\boldsymbol{\lambda}(t) = [\begin{array}{ccccc} \lambda_s & \lambda_{q_1} & \dots & \lambda_{q_N} & \lambda_b \end{array}]^{T},
\end{equation}

\noindent and formulate the Hamiltonian $\mathcal{H}(\mathbf{x}, u, \boldsymbol{\lambda}, t)$:
\begin{align}\label{eqn:pmp:hamiltonian}
	\mathcal{H} &= u_{\mathrm{ave}}(u)\sum_{j=1}^N \gamma_j q_j(t) + \lambda_s(t) u(t) \nonumber\\
    & + \lambda_b(t) \left( P_{\mathrm{in,avg}} - P_{\mathrm{out}}(u(t)) \right) \nonumber \\
    & + \sum_{j=1}^N \lambda_{q_j}(t) \left[ S_j(s)(1 - q_j(t))^2 - \alpha q_j(t)^2 \right].
\end{align}

The evolution of the costates is dictated by the canonical equations $\dot{\boldsymbol{\lambda}} = -\nabla_{\mathbf{x}} \mathcal{H}$:

\begin{subequations}\label{eqn:pmp:costate-dynamics}
	\begin{align}
    		\dot{\lambda}_{s} &= - \left[ \sum_{j=1}^{N} \frac{\partial S_{j}(s)}{\partial s} \lambda_{q_j} (1 - q_j)^2 \right] \label{eqn:pmp:position-costate-dynamics} \\
    		\dot{\lambda}_{b} &= 0 \label{eqn:pmp:battery-costate-dynamics}\\
		\dot{\lambda}_{q_j} &= - u_{\mathrm{ave}}(u) \gamma_j - \lambda_{q_j} \left(-2 S_{j}(s)(1-q_j) - 2 \alpha q_j \right). \label{eqn:pmp:clarity-costate-dynamics}
	\end{align}
\end{subequations}
Notice that because the energy state $b(t)$ does not explicitly appear in the Hamiltonian, its costate dynamics are identically zero, and $\lambda_b$ is constant.

To determine the optimal unconstrained control trajectory $u^*$, PMP requires maximizing the Hamiltonian with respect to $u(t)$ at all times. It is important to note that although $u_{\mathrm{ave}}(u)$ clearly depends on the full speed trajectory $u$, its first derivative with respect to the value of $u(t)$ at any point in time is equal to zero. Consequently, the following stationarity condition results from PMP:

\begin{align}
	\frac{\partial \mathcal{H}}{\partial u} &= u_{\mathrm{ave}} \left(\lambda_s - \lambda_b \frac{\partial P_{\mathrm{out}}(u)}{\partial u}\right) = 0 \nonumber \\
						&\implies \frac{\partial P_{\mathrm{out}}(u^*)}{\partial u} = \frac{\lambda_s}{\lambda_b}. \label{eqn:pmp:stationarity}
\end{align}

The stationarity condition in~\eqref{eqn:pmp:stationarity} is a necessary condition for optimality, and it defines the unconstrained optimal control law $u^*(t)$ as a function of the costates $\lambda_s$ and $\lambda_b$, which follow the dynamics in~\eqref{eqn:pmp:costate-dynamics}. This relationship provides a direct link between the energy dynamics and the information state, thus representing the ``information value of energy'' in the context of the optimal control problem.

\subsection{Theoretical Properties and Analysis of Optimal Solutions}

We now establish several properties of the optimal control law that give insight into its structure and support the subsequent numerical analysis.

\subsubsection{Sufficiency Considerations}

While the stationarity condition in~\eqref{eqn:pmp:stationarity} and the associated boundary conditions establish necessary conditions for optimality, the overall problem is non-convex: The clarity dynamics in~\eqref{eqn:eoc:ocp:clarity-dynamics} introduce a strictly nonlinear equality constraint (via the squared clarity terms and the nonlinear sensing profile $S_j(s)$), violating the affine-constraint requirement for convexity~\cite[Sec 4.2]{boyd2004convex} and prohibiting a blanket sufficiency result from being derived. We can nonetheless establish useful properties arising from the solution of the optimal control problem.

We first define the admissible control set and establish the intuitive result that the optimal reward is a non-decreasing function of available power.

\begin{definition}[Admissible Speed Control Set]\label{def:admissible-set}
For a fixed initial battery state $b(0)=b_0$ and average input power $P$, let $\mathcal{U}(P)$ denote the set of admissible speed policies $u(\cdot)$ satisfying the constraints in~\eqref{eqn:eoc:ocp}. 
\end{definition}

\begin{proposition}[Monotonicity of the Value Function]\label{thm:value-monotonicity}
Let
\begin{equation}
    V(P)\triangleq\sup_{u\in\mathcal{U}(P)}R(u)
\end{equation}
denote the optimal time-averaged clarity under available average input power $P$. For any $P_2>P_1>0$,
\begin{equation}
    V(P_2)\geq V(P_1).
\end{equation}
Consequently, wherever the derivative exists,
\begin{equation}
    \frac{\partial V}{\partial P_{\mathrm{in,avg}}}\geq0.
\end{equation}
\end{proposition}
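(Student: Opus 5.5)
The plan is a feasible-set inclusion argument, and it never touches the PMP conditions. Fix $b(0)=b_0$ and take $P_2>P_1>0$. The key observation is that the available power $P$ enters the problem~\eqref{eqn:eoc:ocp} only through the right-hand side of the average energy constraint~\eqref{eqn:eoc:ocp:energy-constraint}. The objective $R(u)$ does not depend on $P$. Neither do the path dynamics and boundary conditions~\eqref{eqn:eoc:ocp:dynamics}--\eqref{eqn:eoc:ocp:path-bc}, the clarity dynamics~\eqref{eqn:eoc:ocp:clarity-dynamics}, or the periodicity conditions~\eqref{eqn:eoc:ocp:periodicity}. I would state this explicitly, so that $R$ is a functional of the speed trajectory $u(\cdot)$ (together with the periodic clarity states it induces) and nothing else.

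The first step is to show $\mathcal{U}(P_1)\subseteq\mathcal{U}(P_2)$. Take any $u\in\mathcal{U}(P_1)$. It satisfies every $P$-independent constraint, and
\begin{equation*}
\frac{1}{T_{\text{lap}}(u)}\int_0^{T_{\text{lap}}(u)}P_{\mathrm{out}}(u(t))\,dt\le P_1<P_2 ,
\end{equation*}
so $u\in\mathcal{U}(P_2)$. There is also the battery-based form of the constraint, $\dot b=P-P_{\mathrm{out}}(u)$ with $b(0)\le b(T_{\text{lap}})$. For that form, I would check that $b(T_{\text{lap}})-b_0=T_{\text{lap}}(P-\bar P_{\mathrm{out}})$ is increasing in $P$. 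The standing assumption $b(t)\ge0$ is also preserved, because for the same $u$ the state of charge satisfies $b_{P_2}(t)=b_{P_1}(t)+(P_2-P_1)t\ge b_{P_1}(t)$ pointwise.

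The second step uses the fact that a supremum over a larger set is no smaller. For every $u\in\mathcal{U}(P_1)$ we have $R(u)\le\sup_{v\in\mathcal{U}(P_2)}R(v)=V(P_2)$. Taking the supremum over $u$ gives $V(P_1)\le V(P_2)$. To avoid vacuous cases, I would handle the convention $\sup\emptyset=-\infty$. An empty $\mathcal{U}(P_1)$ gives the inequality trivially. If the feasible sets are nonempty, $V$ is bounded above by $\sum_j\gamma_j$, since $q_j\in[0,1]$.

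The final claim follows from monotonicity. Since $V$ is non-decreasing, every difference quotient $(V(P+h)-V(P))/h$ is nonnegative for $h\neq0$. Wherever the limit exists it is therefore $\ge0$, which gives $\partial V/\partial P_{\mathrm{in,avg}}\ge0$. The only real obstacle is bookkeeping: making sure $P$ appears nowhere in $R$ or in the other constraints. I would also note that the fixed $b_0$ in Definition~\ref{def:admissible-set} keeps the admissible sets comparable.
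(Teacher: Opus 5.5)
Your proposal is correct and follows essentially the same route as the paper: show that any $u\in\mathcal{U}(P_1)$ remains admissible under $P_2$ (the paper does this via $b_2(t)=b_1(t)+(P_2-P_1)t$, which you also use), so $\mathcal{U}(P_1)\subseteq\mathcal{U}(P_2)$ and the supremum can only increase. Your extra remarks on the average-power form of the constraint, the empty-set convention, and the difference-quotient argument for the derivative are minor refinements of that same argument.
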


\begin{proof}
Let $u_1(\cdot)\in\mathcal{U}(P_1)$ be arbitrary, with corresponding battery trajectory $b_1(t)$. Applying the same control under $P_2$ yields
\begin{equation}
    b_2(t)=b_1(t)+(P_2-P_1)t.
\end{equation}
Hence, $b_2(t)\geq b_1(t)\geq0$, and
\begin{equation}
\begin{aligned}
    b_2(T_{\mathrm{lap}})-b_0
    ={}&b_1(T_{\mathrm{lap}})-b_0\\
    &+(P_2-P_1)T_{\mathrm{lap}}
    \geq0.
\end{aligned}
\end{equation}
The path and clarity trajectories are unchanged, so $u_1(\cdot)\in\mathcal{U}(P_2)$. Therefore,
$\mathcal{U}(P_1)\subseteq\mathcal{U}(P_2)$, which implies
$V(P_2)\geq V(P_1)$ and the stated derivative inequality wherever $V$ is differentiable.
\end{proof}

Note that:
\begin{equation}\label{eqn:energy-costate-sensitivity}
    \lambda_b = \frac{\partial V}{\partial b} = \frac{1}{T_{\mathrm{lap}}}\frac{\partial V}{\partial P_{\mathrm{in,avg}}}.
\end{equation}

\noindent With this, we now formalize the uniqueness of the pointwise Hamiltonian maximizer under strict satisfaction of the aforementioned monotonicity property.

\begin{proposition}[Unique Pointwise Control Mapping]\label{prop:uniqueness}
Assume that $P_{\mathrm{out}}(u)$ is strictly convex for $u>0$ and that the system strictly satisfies the monotonicity property of Proposition \ref{thm:value-monotonicity} (i.e., $\frac{\partial V}{\partial P_{\mathrm{in,avg}}}>0$). Equivalently, by~\eqref{eqn:energy-costate-sensitivity}, $\lambda_b>0$. Then the stationarity condition derived in Section~\ref{sec:eoc:pmp} defines a unique global pointwise maximizer of the Hamiltonian with respect to $u$.
\end{proposition}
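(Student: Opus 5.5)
The plan is to fix a time $t$ and look only at the part of the Hamiltonian~\eqref{eqn:pmp:hamiltonian} that depends on the pointwise value $u=u(t)$, with the state $\mathbf{x}(t)$ and costate $\boldsymbol{\lambda}(t)$ frozen. As noted in Section~\ref{sec:eoc:pmp}, $u_{\mathrm{ave}}$ is a trajectory-level scalar whose derivative with respect to the value $u(t)$ at a single instant vanishes. So the first term and the clarity terms are constant in $u$ for this pointwise maximization. The remaining $u$-dependent part is
\begin{equation*}
h(u) \triangleq \lambda_s u - \lambda_b P_{\mathrm{out}}(u), \qquad u>0,
\end{equation*}
up to an additive constant. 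The term $\lambda_b P_{\mathrm{in,avg}}$ is constant, and the positive factor $u_{\mathrm{ave}}$ multiplying the stationarity condition does not affect where the maximum sits. Maximizing $\mathcal{H}$ over $u$ is therefore equivalent to maximizing $h$.

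The key steps, in order, are as follows.
\begin{enumerate}
\item Use $\lambda_b>0$, which is equivalent to the strict monotonicity hypothesis through~\eqref{eqn:energy-costate-sensitivity}. Together with $\partial^2 P_{\mathrm{out}}/\partial u^2>0$ from Section~\ref{sec:preliminaries:energy-model}, this gives $h''(u) = -\lambda_b\, \partial^2 P_{\mathrm{out}}/\partial u^2 < 0$ for all $u>0$. Hence $h$ is strictly concave on the convex domain $(0,\infty)$.
\item Conclude that $h'(u)=\lambda_s-\lambda_b\,\partial P_{\mathrm{out}}/\partial u$ is strictly decreasing. The equation $h'(u)=0$, i.e.\ the stationarity condition~\eqref{eqn:pmp:stationarity}, $\partial P_{\mathrm{out}}(u^*)/\partial u=\lambda_s/\lambda_b$, therefore has at most one solution. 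Equivalently, $\partial P_{\mathrm{out}}/\partial u$ is strictly increasing and hence injective, so the inverse map $u^*=(\partial P_{\mathrm{out}}/\partial u)^{-1}(\lambda_s/\lambda_b)$ is well defined on its range.
\item Show that a stationary point of a strictly concave $C^2$ function on an open interval is its unique global maximizer. For any $v\neq u^*$, the strict concavity inequality $h(v)<h(u^*)+h'(u^*)(v-u^*)=h(u^*)$ holds.
\end{enumerate}

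I will also remark why the sign hypothesis matters. If $\lambda_b<0$, then $h$ is strictly convex and the same stationary point is a minimizer. If $\lambda_b=0$, then $h$ is linear and has no interior maximizer unless $\lambda_s=0$. This explains the role of the strict monotonicity assumption.

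The main obstacle is existence rather than uniqueness, together with the domain issue. The stationarity equation has a solution in $(0,\infty)$ only if $\lambda_s/\lambda_b$ lies in the range of $\partial P_{\mathrm{out}}/\partial u$ on $(0,\infty)$. I would handle this by phrasing the result as ``whenever the stationarity condition admits a solution $u^*>0$, it is the unique global pointwise maximizer.'' Under an unboundedness assumption such as $\partial P_{\mathrm{out}}/\partial u\to\infty$, which holds for cubic drag, existence follows for any $\lambda_s/\lambda_b$ above $\lim_{u\to 0^+}\partial P_{\mathrm{out}}/\partial u$. Otherwise the supremum is approached at the boundary $u\to 0^+$, which the free-final-time formulation excludes anyway, since $T_{\text{lap}}$ must be finite.

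I would also flag that the claim $\partial u_{\mathrm{ave}}/\partial u(t)=0$ is inherited from Section~\ref{sec:eoc:pmp} and taken as given. It is what licenses treating the first Hamiltonian term as constant during the pointwise maximization.
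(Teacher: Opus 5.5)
Your proposal is correct and follows the same route as the paper's proof: both use $\partial^2\mathcal{H}/\partial u^2=-\lambda_b\,\partial^2P_{\mathrm{out}}/\partial u^2<0$ on $u>0$ to show that the stationary point is the unique global maximizer of a strictly concave function. Your explicit supporting-line inequality and your caveat that the stationarity condition must actually have a solution $u^*>0$ (i.e.\ that $\lambda_s/\lambda_b$ lies in the range of $\partial P_{\mathrm{out}}/\partial u$) are useful refinements that the paper leaves implicit.
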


\begin{proof}
The stationarity condition defines $u^*(t)$ according to
\begin{equation}
    \frac{\partial P_{\mathrm{out}}(u^*)}{\partial u}
    =\frac{\lambda_s(t)}{\lambda_b}.
\end{equation}
The second derivative of the Hamiltonian with respect to $u$ is
\begin{equation}\label{eqn:pmp:legendre-clebsch}
    \frac{\partial^2\mathcal{H}}{\partial u^2}
    =-\lambda_b
    \frac{\partial^2P_{\mathrm{out}}(u)}{\partial u^2}.
\end{equation}
Strict convexity gives
$\frac{\partial^2P_{\mathrm{out}}(u)}{\partial u^2}>0$, while
$\lambda_b>0$ by assumption. Therefore,
\begin{equation}
    \frac{\partial^2\mathcal{H}}{\partial u^2}<0,
    \qquad \forall u>0.
\end{equation}
Thus, the Hamiltonian is strictly concave with respect to $u$, and $u^*(t)$ is its unique global pointwise maximizer along the optimal state-costate trajectory.
\end{proof}

While this does not guarantee sufficiency, it does rule out the possibility of the solution being a minimizer.

\subsubsection{Continuity of the Optimal Control}

We now establish that under reasonable assumptions on the sensing profiles and power draw model, the optimal unconstrained speed trajectory $u^*(t)$ is a continuous function of time over the interval $t \in [0, T_{\text{lap}}]$. This property is important for practical implementation, as it avoids abrupt speed changes that could be physically infeasible.

\begin{proposition}[Continuity of the Optimal Speed]\label{prop:continuity}
    Assume the position-dependent sensing profiles $S_j(s)$ are continuously differentiable with respect to $s$, and the power draw model $P_{\mathrm{out}}(u)$ is strictly convex in $u$. Then, the optimal unconstrained speed trajectory $u^*(t)$ is a continuous function of time over the interval $t \in [0, T_{\text{lap}}]$.
\end{proposition}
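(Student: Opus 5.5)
The plan is to express $u^*(t)$ as a composition of continuous maps using the stationarity condition~\eqref{eqn:pmp:stationarity}. Continuity of $u^*$ then follows from two facts: continuity of the costate ratio $\lambda_s(t)/\lambda_b$, and continuity of the inverse of the marginal power map $u \mapsto \partial P_{\mathrm{out}}/\partial u$.

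\textbf{Step 1: the costates are continuous.} The costates $\lambda_s$ and $\lambda_{q_j}$ solve the canonical equations~\eqref{eqn:pmp:costate-dynamics}. These are ODEs whose right-hand sides involve $S_j(s(t))$, $\partial S_j/\partial s\,(s(t))$, $q_j(t)$, and the constant $u_{\mathrm{ave}}$.
\begin{itemize}
\item The states $s$ and $q_j$ are absolutely continuous, since they solve ODEs with bounded measurable control.
\item Because $S_j \in C^1$, both $S_j(s(t))$ and $\partial S_j/\partial s\,(s(t))$ are continuous in $t$.
\item The right-hand sides of~\eqref{eqn:pmp:costate-dynamics} are therefore locally integrable (indeed bounded on $[0,T_{\text{lap}}]$).
\end{itemize}
Since the solution of such an ODE is an integral of a bounded function, each costate is absolutely continuous and hence continuous on $[0,T_{\text{lap}}]$. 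By~\eqref{eqn:pmp:battery-costate-dynamics}, $\lambda_b$ is constant. I will work in the setting of Proposition~\ref{prop:uniqueness}, where $\lambda_b>0$, so the ratio $\lambda_s(t)/\lambda_b$ is continuous in $t$.

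\textbf{Step 2: the marginal power map has a continuous inverse.} Let $g(u) \triangleq \partial P_{\mathrm{out}}(u)/\partial u$.
\begin{itemize}
\item Since $P_{\mathrm{out}}\in C^2$, the map $g$ is continuous.
\item Strict convexity, with $\partial^2 P_{\mathrm{out}}/\partial u^2>0$ as assumed in Section~\ref{sec:preliminaries:energy-model}, makes $g$ strictly increasing on $(0,\infty)$.
\item A continuous, strictly increasing function on an interval is a homeomorphism onto its image $g((0,\infty))$, an open interval. Its inverse $g^{-1}$ is therefore continuous.
\end{itemize}
By Proposition~\ref{prop:uniqueness}, the stationarity condition then gives the unique pointwise Hamiltonian maximizer
\begin{equation*}
u^*(t)=g^{-1}\!\left(\lambda_s(t)/\lambda_b\right),
\end{equation*}
which is a composition of continuous maps and hence continuous.

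\textbf{Main obstacle: the formula must hold at every $t$.} A priori the control is only measurable, so I have to check two things.
\begin{enumerate}
\item \emph{The ratio stays in the range of $g$.} Because $u^*$ is unconstrained and stationarity holds along the optimal trajectory, $\lambda_s(t)/\lambda_b$ must lie in $g((0,\infty))$ at almost every $t$. Since the ratio is continuous and the range is an interval, its values at the remaining times lie in the closure. I will assume, or argue from $u^*>0$ being bounded away from zero on a finite lap with $T_{\text{lap}}<\infty$, that the ratio stays inside the open range. The argument is that the range of a continuous function on a compact interval is compact, and this compact set must be interior for the inverse to be defined.
\item \emph{Almost-everywhere agreement suffices.} PMP only gives Hamiltonian maximization for almost every $t$. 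The function $g^{-1}(\lambda_s/\lambda_b)$ is continuous and agrees with $u^*$ almost everywhere. Redefining $u^*$ on a null set changes neither the trajectory nor the reward, so this continuous representative is the optimal speed trajectory.
\end{enumerate}
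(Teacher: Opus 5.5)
Your proposal is correct and follows the paper's own argument: continuity of $\lambda_s$ from the canonical equations with $C^1$ sensing profiles, continuity of the inverse of the strictly increasing marginal power map, and composition through the stationarity condition with $\lambda_b \neq 0$, an assumption the paper likewise takes from the setting of Proposition~\ref{prop:uniqueness}. Your extra care about almost-everywhere equality, and about $\lambda_s/\lambda_b$ staying in the open range of $\partial P_{\mathrm{out}}/\partial u$, goes beyond the paper, which simply asserts the formula for all $t$. One correction there: a finite $T_{\text{lap}}$ alone does not keep $u^*$ bounded away from zero, so state essential bounds $0<c\le u^*\le C$ as an explicit assumption; continuity of the ratio then places it in $[g(c),g(C)]$ for all $t$, which settles the point.
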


\begin{proof}
    Recall the spatial costate dynamics in~\eqref{eqn:pmp:position-costate-dynamics}:
    \begin{equation}
        \dot{\lambda}_s(t) = -\sum_{j=1}^N \lambda_{q_j}(t) \frac{\partial S_j(s(t))}{\partial s} (1-q_j(t))^2.
    \end{equation}
    Because the state trajectories $q_j(t)$ and $s(t)$ are continuous in time (due to their finite time derivatives), and all $S_j(s)$ are continuously differentiable with respect to $s$, the right-hand side of this differential equation is continuous. Therefore, the spatial costate $\lambda_s(t)$ is continuous in time.

    Because $P_{\mathrm{out}}(u)$ is strictly convex by assumption ($\frac{\partial^2 P_{\mathrm{out}}(u)}{\partial u^2} > 0$), its first derivative $\frac{\partial P_{\mathrm{out}}(u)}{\partial u}$ is a continuous and strictly monotonically increasing function of $u$. Consequently, by the Inverse Function Theorem, its inverse mapping $\left(\frac{\partial P_{\mathrm{out}}}{\partial u}\right)^{-1}$ exists and is continuous, so~\eqref{eqn:pmp:stationarity} can be solved explicitly for the optimal control:
    \begin{equation}
        u^*(t) = \left(\frac{\partial P_{\mathrm{out}}}{\partial u}\right)^{-1} \left( \frac{\lambda_s(t)}{\lambda_b} \right).
    \end{equation}
    Since $\lambda_b$ is a non-zero constant, the argument to the inverse function is continuous. The composition of the continuous inverse mapping $\left(\frac{\partial P_{\mathrm{out}}}{\partial u}\right)^{-1}$ with the continuous costate trajectory $\lambda_s(t)$ guarantees that $u^*(t)$ is continuous. 
\end{proof}

\subsubsection{Speed Trajectory Analysis}\label{sec:eoc:speed-trajectory}
An important element of our theoretical analysis lies in establishing an understanding of how the optimal speed trajectory must vary in the face of unequal importance weighting along the path. Intuition would suggest that the optimal robot behavior will involve slowing down in locations where the importance weighting is higher, in order to allocate more time sensing these regions. The accompanying theory confirms this intuition and also quantifies the mechanism and extent to which (and by which) this occurs.

Before analyzing the speed trajectory itself, we establish a property of the clarity costates $\lambda_{q_j}(t)$ that the subsequent analysis relies on.

\begin{lemma}[Positivity of the Clarity Costate]\label{lem:costate-positivity}
    Along any optimal extremal trajectory, $\lambda_{q_j}(t) > 0$ for all $t \in [0, T_{\mathrm{lap}}]$ and all $j \in \{1, \dots, N\}$.
\end{lemma}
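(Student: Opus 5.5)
The costate equation \eqref{eqn:pmp:clarity-costate-dynamics} is scalar and linear in $\lambda_{q_j}$, and it is subject to a periodic boundary condition. The plan is to solve it explicitly and read the sign off the closed form. Write
\begin{equation}
    \dot{\lambda}_{q_j}(t) = a_j(t)\,\lambda_{q_j}(t) - u_{\mathrm{ave}}\gamma_j,
    \qquad a_j(t) \triangleq 2S_j(s(t))(1-q_j(t)) + 2\alpha q_j(t).
\end{equation}
Here $a_j$ is continuous along the extremal, since $q_j$ and $s$ are continuous. It is nonnegative, because $q_j\in[0,1]$, $S_j\geq 0$ and $\alpha>0$. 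Its integral over a lap is strictly positive. To see this, note that $a_j=0$ would require both $q_j=0$ and $S_j(1-q_j)=0$, which is impossible on any open interval with $\alpha>0$. At $q_j=0$ we have $\dot q_j=S_j\geq0$, and $q_j$ cannot stay at $0$ with $a_j=0$ unless $S_j=0$ on that interval. In the worst case I would argue $q_j>0$ somewhere, because the Gaussian $S_j>0$. Hence $A_j\triangleq\int_0^{T_{\mathrm{lap}}}a_j\,dt>0$.

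The boundary condition on $\lambda_{q_j}$ comes from transversality for the periodicity constraint \eqref{eqn:eoc:ocp:periodicity}. Because $q_j(0)=q_j(T_{\mathrm{lap}})$ is free but shared, the standard periodic-PMP argument gives $\lambda_{q_j}(0)=\lambda_{q_j}(T_{\mathrm{lap}})$. I would state this explicitly as the first step, since the paper's necessary conditions did not list it.

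Next, apply variation of constants with $\Phi_j(t)=\exp\!\left(\int_0^t a_j(\tau)\,d\tau\right)$:
\begin{equation}
    \lambda_{q_j}(t)=\Phi_j(t)\left[\lambda_{q_j}(0)-u_{\mathrm{ave}}\gamma_j\int_0^t \Phi_j(\tau)^{-1}d\tau\right].
\end{equation}
Imposing periodicity and using $\Phi_j(T_{\mathrm{lap}})=e^{A_j}>1$ determines the initial value uniquely:
\begin{equation}
    \lambda_{q_j}(0)=\frac{u_{\mathrm{ave}}\gamma_j\, e^{A_j}\int_0^{T_{\mathrm{lap}}}\Phi_j^{-1}d\tau}{e^{A_j}-1}>0,
\end{equation}
because $u_{\mathrm{ave}}>0$ and $\gamma_j>0$.

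To get positivity at every $t$, a cleaner route is to integrate backward from $T_{\mathrm{lap}}$ instead. This gives a formula valid for all $t$:
\begin{equation}
    \lambda_{q_j}(t)=\frac{u_{\mathrm{ave}}\gamma_j}{1-e^{-A_j}}\int_t^{t+T_{\mathrm{lap}}} e^{-\int_t^{\tau}a_j}\,d\tau,
\end{equation}
where $a_j$ is extended $T_{\mathrm{lap}}$-periodically. This expression is manifestly positive.

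An alternative check is a sign argument. If $\lambda_{q_j}$ vanished at some $t_0$, then $\dot\lambda_{q_j}(t_0)=-u_{\mathrm{ave}}\gamma_j<0$, so $\lambda_{q_j}$ can only cross zero downward. Once negative it stays negative, since at any later zero the derivative is again negative. That contradicts periodicity unless $\lambda_{q_j}\le0$ throughout. In that case $\dot\lambda_{q_j}\le -u_{\mathrm{ave}}\gamma_j<0$ everywhere, which again contradicts periodicity.

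The main obstacle is justifying the periodic transversality condition and the strict positivity of $A_j$ (equivalently, excluding the degenerate case $a_j\equiv0$). The sign argument in the previous paragraph avoids needing $A_j>0$ entirely, so I would present that as the core proof. It relies only on $\gamma_j>0$, $u_{\mathrm{ave}}>0$, and periodicity of $\lambda_{q_j}$.
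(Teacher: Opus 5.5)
Your proposal is correct, and its closed-form half is essentially the paper's proof. The paper's integrating factor $\mu_j(t)=\exp(-\int_0^t a_j\,d\tau)$ is your $1/\Phi_j$, and the paper imposes $\lambda_{q_j}(0)=\lambda_{q_j}(T_{\mathrm{lap}})$ just as you do. It then writes $\lambda_{q_j}(t)$ as $u_{\mathrm{ave}}\gamma_j/\mu_j(t)$ times a positive combination of $\int_0^t\mu_j$ and $\int_t^{T_{\mathrm{lap}}}\mu_j$. That is exactly your backward periodic formula, with the integral over $[t,t+T_{\mathrm{lap}}]$ split at $T_{\mathrm{lap}}$.

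The zero-crossing argument you intend as the core proof is a genuinely different and more elementary route. It needs no explicit solution and no check that $A_j>0$, which the paper needs for $\mu_j(T_{\mathrm{lap}})<1$. Your direct argument for $A_j>0$ is loose; for the Gaussian profile it is immediate, since $a_j\ge 2\min(S_j,\alpha)>0$ pointwise. But you are right that it is dispensable. A periodic costate exists along an extremal by PMP, and that existence already excludes the degenerate case $a_j\equiv 0$, because $\dot\lambda_{q_j}=-u_{\mathrm{ave}}\gamma_j$ has no periodic solution.

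The final step of the sign argument, ruling out $\lambda_{q_j}<0$ throughout, does use $a_j\ge 0$ (so that $a_j\lambda_{q_j}\le 0$). That belongs in your closing list of ingredients alongside $\gamma_j>0$, $u_{\mathrm{ave}}>0$, and periodicity.

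In return, the closed form gives structure: it exhibits $\lambda_{q_j}$ as $\gamma_j$ times a positive, trajectory-dependent factor. The paper leans on exactly this proportionality after Proposition~\ref{prop:non-uniform-speed} to argue that high-weight nodes dominate $\xi$. The sign argument alone yields only positivity, so keep your backward formula as well.

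Your explicit derivation of $\lambda_{q_j}(0)=\lambda_{q_j}(T_{\mathrm{lap}})$ from transversality for the shared periodic endpoint fills in a step the paper uses without comment.
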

\begin{proof}
    Since $S_j(s) \ge 0$, $\alpha > 0$, and $q_j(t) \in (0,1)$ by the definition of clarity, the coefficient $A_j(t) \triangleq 2\left[S_j(s(t))(1-q_j(t)) + \alpha q_j(t)\right]$ in~\eqref{eqn:pmp:clarity-costate-dynamics} is non-negative (and strictly positive whenever $S_j(s(t))>0$). We can rewrite~\eqref{eqn:pmp:clarity-costate-dynamics} in the form:
    \begin{equation}\label{eqn:costate-linear-ode}
        \dot{\lambda}_{q_j}(t) - A_j(t) \lambda_{q_j}(t) = -u_{\mathrm{ave}}(u)\gamma_j,
    \end{equation}
    with $u_{\mathrm{ave}}(u)\gamma_j > 0$ constant. Let $\mu_j(t) \triangleq \exp\left(-\int_0^t A_j(\tau)\,d\tau\right)$ denote the integrating factor; since $A_j \ge 0$, $\mu_j(t)$ is non-increasing with $\mu_j(0) = 1 \ge \mu_j(T_{\mathrm{lap}}) > 0$. Solving~\eqref{eqn:costate-linear-ode} subject to the periodicity constraint $\lambda_{q_j}(0) = \lambda_{q_j}(T_{\mathrm{lap}})$ yields the closed-form expression
    \begin{equation}\label{eqn:costate-closed-form}
        \begin{aligned}
            \lambda_{q_j}(t) = \frac{u_{\mathrm{ave}}(u)\gamma_j}{\mu_j(t)} \biggl[{}&
                \frac{\mu_j(T_{\mathrm{lap}})}{1-\mu_j(T_{\mathrm{lap}})}
                \int_0^{t} \mu_j \, d\tau \\
                &{}+ \frac{1}{1-\mu_j(T_{\mathrm{lap}})}
                \int_{t}^{T_{\mathrm{lap}}} \mu_j \, d\tau
            \biggr].
        \end{aligned}
    \end{equation}
    Since $\mu_j(T_{\mathrm{lap}}) \in (0,1)$, both bracketed integrals are non-negative with strictly positive coefficients (strictly positive for $t<T_{\mathrm{lap}}$, and strictly positive at $t=T_{\mathrm{lap}}$ by periodicity), and the prefactor $u_{\mathrm{ave}}(u)\gamma_j/\mu_j(t)$ is strictly positive. Hence $\lambda_{q_j}(t) > 0$ for all $t \in [0, T_{\mathrm{lap}}]$.
\end{proof}

From this result, we can see that $\lambda_{q_j}(t)$ is directly proportional to $\gamma_j$ of the corresponding node, and is strictly positive for all $t \in [0, T_{\mathrm{lap}}]$ and all $j \in \{1, \dots, N\}$. The result enables us to establish a quantitative understanding of how the robot's optimal speed trajectory exhibits acceleration and deceleration due to changes in information importance weighting along the path. Prior to presenting our main result, it will be useful to define a quantity termed the \textit{weighted spatial sensing sensitivity}, given by:

\begin{equation}
\xi(\lambda_{q_j(t)},S_j(s(t)),q_j(t)) = \sum_{j=1}^N \lambda_{q_j}(t)
        \frac{\partial S_j(s(t))}{\partial s} (1-q_j(t))^2.
\end{equation}
\noindent This quantity reflects a weighted sum -- across all the nodes in the path -- of the sensitivities of the nodal sensing functions to the robot's position along the path. With that quantity defined, we are now in a position to present the following result:

\begin{proposition}[Information-Driven Acceleration]\label{prop:non-uniform-speed}
    Along an optimal extremal trajectory in a spatially heterogeneous environment, the sign of the time derivative of the optimal speed profile $\dot{u}^*(t)$ opposes the sign of the weighted spatial sensing sensitivity, according to:
    \begin{equation}
    \dot{u}^*(t) = -\frac{1}{\lambda_b \frac{\partial^2 P_{\mathrm{out}}(u^*(t))}{\partial u^2}} \xi(\lambda_{q_j(t)},S_j(s(t)),q_j(t)).
    \end{equation}
    
\end{proposition}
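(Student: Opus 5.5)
The plan is to differentiate the stationarity condition~\eqref{eqn:pmp:stationarity} in time along the optimal extremal and then substitute the spatial costate dynamics~\eqref{eqn:pmp:position-costate-dynamics}. Write the stationarity condition as
\begin{equation*}
\lambda_b \frac{\partial P_{\mathrm{out}}(u^*(t))}{\partial u} = \lambda_s(t).
\end{equation*}
By~\eqref{eqn:pmp:battery-costate-dynamics}, $\lambda_b$ is constant in time. By Proposition~\ref{prop:uniqueness}, $\lambda_b>0$ under the standing strict-monotonicity assumption.

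The first step is to justify that $u^*$ is differentiable, not merely continuous as in Proposition~\ref{prop:continuity}. Since $S_j\in C^1$, the right-hand side of~\eqref{eqn:pmp:position-costate-dynamics} is continuous in $t$. Hence $\lambda_s\in C^1$.

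Since $P_{\mathrm{out}}\in C^2$ with $\partial^2 P_{\mathrm{out}}/\partial u^2>0$, the map $u\mapsto \partial P_{\mathrm{out}}/\partial u$ is $C^1$ with nonvanishing derivative. By the inverse function theorem, its inverse is $C^1$, so
\begin{equation*}
u^*(t)=\left(\frac{\partial P_{\mathrm{out}}}{\partial u}\right)^{-1}\!\left(\frac{\lambda_s(t)}{\lambda_b}\right)
\end{equation*}
is $C^1$. This regularity step is the only delicate point. The rest is a chain-rule computation.

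The second step is to apply the chain rule to the stationarity identity:
\begin{equation*}
\lambda_b \frac{\partial^2 P_{\mathrm{out}}(u^*(t))}{\partial u^2}\,\dot u^*(t) = \dot\lambda_s(t) = -\sum_{j=1}^N \lambda_{q_j}(t)\frac{\partial S_j(s(t))}{\partial s}(1-q_j(t))^2 .
\end{equation*}
The right-hand side is exactly $-\xi$. Because $\lambda_b>0$ and the second derivative of $P_{\mathrm{out}}$ is strictly positive, the coefficient of $\dot u^*$ is strictly positive. Dividing by it yields the stated formula.

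The sign statement follows immediately: the prefactor $1/(\lambda_b\,\partial^2P_{\mathrm{out}}/\partial u^2)$ is positive, so $\operatorname{sgn}\dot u^*=-\operatorname{sgn}\xi$.

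I would add a remark interpreting $\xi$ using Lemma~\ref{lem:costate-positivity}. Because $\lambda_{q_j}>0$ and $(1-q_j)^2>0$, $\xi$ is a positively weighted sum of the slopes $\partial S_j/\partial s$. It is therefore positive when the robot approaches nodes with large weighted costate $\lambda_{q_j}\propto\gamma_j$, which forces deceleration there. It is negative when the robot moves away from them, which forces acceleration. This explains the phrase ``spatially heterogeneous environment'': with uniform weights and a symmetric discretization, $\xi$ can vanish identically.
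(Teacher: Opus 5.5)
Your proposal is correct and follows the paper's proof essentially step for step. Like the paper, you differentiate the stationarity condition in time, substitute $\dot\lambda_s=-\partial\mathcal{H}/\partial s$, and read off the sign from $\partial^2P_{\mathrm{out}}/\partial u^2>0$ together with the assumed $\lambda_b>0$ of Proposition~\ref{prop:uniqueness}; your added argument that $u^*$ is $C^1$ (via $\lambda_s\in C^1$ and the inverse function theorem applied to $\partial P_{\mathrm{out}}/\partial u$) makes explicit a differentiability that the paper's proof uses without stating.
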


\vspace{2mm}

\begin{proof}
    Differentiating the stationarity condition~\eqref{eqn:pmp:stationarity} with respect to $t$ yields:
    \begin{equation}
        \frac{\partial^2 P_{\mathrm{out}}(u^*(t))}{\partial u^2} \dot{u}^*(t) = \frac{\dot{\lambda}_s(t)}{\lambda_b}.
    \end{equation}
    Substituting the costate dynamics for $\dot{\lambda}_s(t)$ established by the canonical equations isolates the control derivative:
    \begin{align}
        \frac{\partial^2 P_{\mathrm{out}}(u^*(t))}{\partial u^2}
        \dot{u}^*(t)
        &= -\frac{1}{\lambda_b} \frac{\partial \mathcal{H}}{\partial s}
        \nonumber \\
        &= -\frac{1}{\lambda_b} \sum_{j=1}^N \lambda_{q_j}(t)
        \frac{\partial S_j(s(t))}{\partial s} (1-q_j(t))^2.
    \end{align}
    Rearranging this expression isolates the time derivative of the optimal control:
    \begin{equation} \label{eq:sensitivity_summation}
        \dot{u}^*(t) = -\frac{1}{\lambda_b \frac{\partial^2 P_{\mathrm{out}}(u^*(t))}{\partial u^2}} \sum_{j=1}^N \lambda_{q_j}(t) \frac{\partial S_j(s(t))}{\partial s} (1-q_j(t))^2.
    \end{equation}
By the strict convexity of the power draw model, $\frac{\partial^2 P_{\mathrm{out}}(u^*(t))}{\partial u^2} > 0$. Under the strictly energy-constrained assumption of Proposition~\ref{prop:uniqueness}, we have that $\lambda_b > 0$. Therefore, the sign of $\dot{u}^*(t)$ will always oppose the sign of $\xi(\lambda_{q_j(t)},S_j(s(t)),q_j(t))$.
\end{proof}

An important interpretation follows from the aforementioned result. Based on the fact that Lemma~\ref{lem:costate-positivity} guarantees that $\lambda_{q_j}(t) > 0$, and given that $(1-q_j(t))^2 \ge 0$, each individual term in the summation of \eqref{eq:sensitivity_summation} contributes to an acceleration whose sign opposes that of $\frac{\partial S_j(s)}{\partial s}$. However, the terms do not carry equal weight. Specifically, Lemma~\ref{lem:costate-positivity} shows that the costate $\lambda_{q_j}$ is linearly proportional to $\gamma_j$, meaning that nodes with larger importance weighting ($\gamma_j$) carry more weight in the summation. As the robot approaches a node of relative importance, $\frac{\partial S_j}{\partial s} > 0$ for that node. Because of its relative importance, that element carries outsized influence on the aggregate sum, likely rendering $\dot{u}^*(t) < 0$ and compelling the robot to decelerate. Conversely, as the robot departs this high-value node and sensing potential diminishes ($\frac{\partial S_j}{\partial s} < 0$), the robot is compelled to accelerate. 

The following result confirms that in the special case of a uniformly important environment, the optimal speed trajectory is indeed constant:
\begin{corollary}[Constant Speed in Uniform Environments]\label{cor:uniform-speed}
    In a uniform, shift-invariant environment, where $\gamma_j = \gamma_k$ for all $j,k \in \{1, \ldots, n\}$, the aggregate spatial gradient vanishes ($\frac{\partial \mathcal{H}}{\partial s} = 0$), dictating a constant speed profile $u^*(t) = C$.
\end{corollary}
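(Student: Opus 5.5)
The plan is to use the acceleration formula of Proposition~\ref{prop:non-uniform-speed} together with symmetry. It then suffices to show that the weighted spatial sensing sensitivity $\xi$ vanishes identically along the extremal. Under that reading, the corollary is a statement about the symmetric extremal: by Proposition~\ref{prop:uniqueness} the pointwise maximizer is unique, and we look for the extremal consistent with the shift invariance.

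\emph{Step 1: what ``shift-invariant'' means.} We will interpret the hypothesis precisely. The nodes are equally spaced around the closed path, $x_j = jL/N$. The sensing profile depends only on the periodic arclength offset, $S_j(s) = \bar S(s - x_j)$ with $\bar S$ being $L$-periodic and $C^1$. All weights are equal, $\gamma_j=\gamma$.

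\emph{Step 2: a constant-speed candidate.} Take $u(t)\equiv C$. Then $s(t)=Ct$ and $T_{\mathrm{lap}}=L/C$. Shift invariance implies that the periodic clarity solution at node $j$ is a time shift of that at node $1$: $q_j(t)=\bar q(t - x_j/C)$. Existence and uniqueness of the periodic orbit follow because each clarity ODE is scalar and contracting, since $\partial \dot q/\partial q = -A_j<0$. By the closed form \eqref{eqn:costate-closed-form} in Lemma~\ref{lem:costate-positivity}, and since $\gamma_j$ and $u_{\mathrm{ave}}$ are common to all nodes, the periodic costates are shifted copies as well: $\lambda_{q_j}(t)=\bar\lambda(t-x_j/C)$. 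Consequently each summand of $\xi$ has the form $g(s(t)-x_j)$ for a single $L$-periodic function $g(\theta)=\bar\lambda(\theta/C)\,\bar S'(\theta)\,(1-\bar q(\theta/C))^2$. Hence $\xi(t)=\sum_j g(Ct - jL/N)$, which is $L/(NC)$-periodic in $t$. This does \emph{not} yet make it zero.

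\emph{Step 3: showing $\xi\equiv0$, the main obstacle.} I would argue in the continuum limit $N\to\infty$, which the paper explicitly endorses, where the sum becomes a Riemann sum, $\xi \approx \frac{N}{L}\int_0^L g(\theta)\,d\theta$. This integral is independent of $t$. Its value is fixed by the $\lambda_s$ dynamics: $\dot\lambda_s=-\xi$ must be consistent with $\lambda_s$ being constant, which is forced by \eqref{eqn:pmp:stationarity} when $u$ is constant. Equivalently, since $\xi$ is constant in $t$ while $\lambda_s$ is periodic (the problem is shift invariant and free in the lap start point, so the costate satisfies the transversality condition $\lambda_s(0)=\lambda_s(T_{\mathrm{lap}})$), we have $\int_0^{T_{\mathrm{lap}}}\xi\,dt=0$, and so $\xi\equiv 0$. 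Alternatively, one can justify $\partial\mathcal H/\partial s=0$ directly: $\mathcal H$ summed over a full shift orbit is invariant under translation in $s$, so its $s$-derivative vanishes. For finite $N$, I expect only a residual ripple of period $L/(NC)$ that decays with $N$. I would note this honestly, or strengthen the hypothesis to continuous shift invariance so that the sum is exactly translation invariant.

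\emph{Step 4: conclusion.} With $\partial\mathcal H/\partial s=0$, Proposition~\ref{prop:non-uniform-speed} gives $\dot u^*=0$. The continuity result, Proposition~\ref{prop:continuity}, then gives $u^*\equiv C$. The value of $C$ is determined by the energy constraint \eqref{eqn:eoc:ocp:energy-constraint} holding with equality (since $\lambda_b>0$), i.e.\ $P_{\mathrm{out}}(C)=P_{\mathrm{in,avg}}$.
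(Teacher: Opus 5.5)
Your Step 4 is essentially the paper's entire proof. The paper takes $\partial\mathcal{H}/\partial s=0$ as the meaning of ``shift-invariant,'' so $\dot\lambda_s=0$ and the right side of \eqref{eqn:pmp:stationarity} is constant. Injectivity of $\partial P_{\mathrm{out}}/\partial u$ then gives $u^*\equiv C$. The gap is in Step 3, where you try to \emph{derive} $\xi\equiv 0$: both of your arguments for why the constant $\frac{N}{L}\int_0^L g$ must vanish are circular. Using \eqref{eqn:pmp:stationarity} to say $\lambda_s$ is constant when $u\equiv C$ presupposes that the constant-speed trajectory is an extremal. That is, it assumes some $\lambda_s$ satisfies both $\dot\lambda_s=-\xi$ and stationarity, but such a $\lambda_s$ exists exactly when $\xi\equiv0$, which is the claim. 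The transversality version has the same defect: transversality is a property of extremals, so it cannot certify one. Moreover, in the paper's formulation both $s(0)=0$ and $s(T_{\mathrm{lap}})=L$ are fixed, so $\lambda_s$ has no boundary condition at all. Your ``alternative'' remark is the right intuition, but translation invariance holds only for a \emph{simultaneous} shift of $s$ and the node states, whereas $\partial\mathcal{H}/\partial s$ freezes the states. Turning one into the other needs an extra identity.

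That identity uses only the $q_j$ and $\lambda_{q_j}$ canonical equations. With $h_j \triangleq u_{\mathrm{ave}}\gamma_j q_j+\lambda_{q_j}\bigl[S_j(s)(1-q_j)^2-\alpha q_j^2\bigr]$,
\[
\dot h_j=\lambda_{q_j}(1-q_j)^2\,\frac{\partial S_j(s)}{\partial s}\,u(t), \qquad\text{so}\qquad \sum_{j=1}^N \dot h_j = u(t)\,\xi(t).
\]
Since $q_j$, the periodic costate $\lambda_{q_j}$ of Lemma~\ref{lem:costate-positivity}, and $S_j(s(t))$ are all $T_{\mathrm{lap}}$-periodic, this gives $\int_0^{T_{\mathrm{lap}}}u\,\xi\,dt=0$. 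For $u\equiv C$ it yields $\int_0^L g=0$ without circularity, and combined with your Riemann-sum step, $\xi\to0$ in the continuum limit.

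Your finite-$N$ caveat is correct, and the paper's one-line ``by shift-invariance'' skips it. With only the discrete $N$-fold symmetry, $\xi$ is a mean-zero $L/(NC)$-periodic ripple that generically is not zero (take $N=1$). Exact constancy therefore needs continuous shift invariance.

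Finally, note the direction of your argument. Even repaired, Steps 2--3 show only that the constant-speed trajectory \emph{satisfies} the necessary conditions, not that the optimal control must be constant. Proposition~\ref{prop:uniqueness} gives uniqueness of the pointwise Hamiltonian maximizer, not uniqueness of extremals. What makes the corollary a necessity statement is the paper's reading, in which $\partial\mathcal{H}/\partial s=0$ holds along the optimal extremal by hypothesis. Under that reading, your Step 4 alone suffices.
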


\begin{proof}
    By shift-invariance, $\frac{\partial \mathcal{H}}{\partial s} = 0$. Substituting this into the spatial costate dynamics yields $\dot{\lambda}_s = 0$, rendering the right side of the stationarity condition a constant. Due to the strict convexity of $P_{\mathrm{out}}$, the marginal power mapping is injective, forcing the speed $u^*(t)$ to evaluate to a strictly constant scalar $C$.
\end{proof}

\subsubsection{High-Speed Information Limit}\label{sec:eoc:high-speed}
While it is reasonable to assume (and is in fact confirmed through case studies) that the optimal reward function value increases as more energy becomes available, it turns out that this growth is not unbounded. Due to the nonlinear nature of the clarity dynamics, the time-averaged clarity ultimately approaches a fundamental saturation limit, as delineated in the following proposition:

\begin{proposition}[High-Speed Saturation]\label{lem:limiting_clarity}
    Consider the clarity dynamics in~\eqref{eqn:eoc:ocp:clarity-dynamics}. For a constant traversal speed $u$, as $u \to \infty$, the time-averaged clarity $\bar{q}_j(u)$ at any node $j$ converges to a finite limit:
    \begin{equation}
        q_{j,\infty} = \lim_{u \to \infty} \bar{q}_j(u) = \frac{\sqrt{\bar{S}_j}}{\sqrt{\bar{S}_j} + \sqrt{\alpha}},
    \end{equation}
    where $\bar{S}_j = \frac{1}{L}\int_0^L S_j(s)\,ds$ is the spatial average of the sensing profile over the path.
\end{proposition}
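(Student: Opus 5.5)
Our approach is averaging (separation of time scales). For constant speed $u$, the lap period is $T = L/u$, and the sensing input to node $j$ is the $T$-periodic signal $S_j(ut)$. As $u \to \infty$, this input oscillates arbitrarily fast relative to the intrinsic clarity dynamics. We therefore expect the periodic solution $q_j(t)$ to become nearly constant and equal to the equilibrium of the averaged ODE
\begin{equation*}
\dot{\bar q} = \bar S_j (1-\bar q)^2 - \alpha \bar q^2 .
\end{equation*}
Note that the time average of $S_j(ut)$ over one lap equals $\frac{1}{T}\int_0^T S_j(ut)\,dt = \frac{1}{L}\int_0^L S_j(s)\,ds = \bar S_j$. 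Solving $\bar S_j(1-q)^2 = \alpha q^2$ on $[0,1]$ gives $\sqrt{\bar S_j}(1-q) = \sqrt{\alpha}\, q$, that is, $q = \sqrt{\bar S_j}/(\sqrt{\bar S_j}+\sqrt{\alpha})$. This is the claimed limit.

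To make this rigorous, we proceed in three steps.
\begin{enumerate}
\item \emph{Existence of the periodic orbit.} The period-$T$ Poincar\'e map of $\dot q = S_j(ut)(1-q)^2 - \alpha q^2$ maps $[0,1]$ into itself, since the field is $\ge 0$ at $q=0$ and $<0$ at $q=1$. It is a contraction whenever $\bar S_j > 0$: the derivative of the field in $q$ is $-2[S_j(1-q)+\alpha q] < 0$. Hence the periodic solution $q_j^{(u)}$ satisfying constraint~\eqref{eqn:eoc:ocp:periodicity} exists and is unique. If $\bar S_j = 0$, then $q \equiv 0$ and the formula holds trivially.
\item \emph{Uniform closeness of $q_j^{(u)}$ to a constant.} Integrating over one lap, $|q_j^{(u)}(t) - q_j^{(u)}(0)| \le T \max|\dot q| \le (L/u)(\max S_j + \alpha)$, which tends to $0$ as $u \to \infty$. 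The oscillation of $q_j^{(u)}$ within a lap is therefore $O(1/u)$.
\item \emph{Periodicity condition.} Integrating the ODE over one period and using periodicity gives
\begin{equation*}
0 = \frac{1}{T}\int_0^T \left[ S_j(ut)(1-q)^2 - \alpha q^2 \right] dt .
\end{equation*}
Replacing $q(t)$ by $c_u = q_j^{(u)}(0)$ introduces an $O(1/u)$ error, because the integrand is Lipschitz in $q$ with bounded constant. This yields $\bar S_j(1-c_u)^2 - \alpha c_u^2 = O(1/u)$.
\end{enumerate}

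The function $g(c) = \bar S_j(1-c)^2 - \alpha c^2$ is strictly decreasing on $[0,1]$, with a unique root $q_{j,\infty}$ and $|g'| \ge 2\min(\bar S_j, \alpha) > 0$ there. It follows that $|c_u - q_{j,\infty}| = O(1/u)$. Combining with step 2, the time average satisfies $|\bar q_j(u) - q_{j,\infty}| = O(1/u) \to 0$.

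The main obstacle we anticipate is step 3: turning the averaged balance equation into convergence of the actual periodic orbit. The two ingredients that settle it are the uniform $O(1/u)$ bound on the intra-lap variation and the strict monotonicity of $g$, which gives a quantitative inverse. The only delicate case is $\bar S_j = 0$, which must be handled separately as noted in step 1.
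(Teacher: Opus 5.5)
Your proof is correct, but it takes a genuinely different route from the paper's. The paper rescales to arc length and writes $dq_j/ds=\varepsilon f(s,q_j)$ with $\varepsilon=1/u$. It computes the averaged field $f_{\text{avg}}(q)=\bar S_j(1-q)^2-\alpha q^2$ and checks that its root $q_{j,\infty}$ is exponentially stable via the sign of $\partial f_{\text{avg}}/\partial q$. It then cites the periodic averaging theorem (Khalil, Thm.~10.4) to conclude that the $L$-periodic solution converges uniformly to $q_{j,\infty}$.

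You avoid that theorem entirely. The Poincar\'e map gives you the periodic orbit. The exact identity $\int_0^T \dot q\,dt=0$, combined with the $O(1/u)$ bound on intra-lap variation, gives $g(c_u)=O(1/u)$. The bound $|g'|\ge 2\min(\bar S_j,\alpha)$ then converts this into $|c_u-q_{j,\infty}|=O(1/u)$; this is exactly the paper's Jacobian, used as a quantitative inverse rather than as a stability certificate.

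Your version is self-contained, yields an explicit rate with computable constants, and is global: every $T$-periodic solution in $[0,1]$ is forced toward $q_{j,\infty}$. By contrast, Thm.~10.4 only guarantees, for sufficiently small $\varepsilon$, a unique periodic solution in an $O(\varepsilon)$ neighborhood of the averaged equilibrium. Showing that the orbit selected by the periodicity constraint is the one that converges then needs a separate (easy, in one dimension) argument. The paper's route, in turn, is shorter and sits in a framework that extends to coupled higher-dimensional clarity dynamics, whereas your step 3 relies on the scalar structure.

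One detail to tighten in step 1: $\partial_q f=-2[S_j(1-q)+\alpha q]$ is only $\le 0$, since it vanishes at $q=0$ wherever $S_j=0$. For a genuine contraction, use $S_j(1-q)+\alpha q\ge\min(S_j,\alpha)$ on $[0,1]$, which gives the factor $\exp\bigl(-2\int_0^T\min(S_j(ut),\alpha)\,dt\bigr)<1$ when $\bar S_j>0$. Note also that your convergence argument in steps 2--3 never uses uniqueness, only existence.
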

\begin{proof}
    We transition from the temporal domain to the spatial domain. Using the relation $s = u t$, we have $ds = u \, dt$. Applying this change of independent variable to the clarity dynamics yields:
    \begin{equation} \label{eqn:spatial_ode}
        \frac{dq_j}{ds} = \frac{1}{u} \left[ S_j(s)(1-q_j)^2 - \alpha q_j^2 \right].
    \end{equation}
    Let $\varepsilon = 1/u$. Eq.~\eqref{eqn:spatial_ode} is then expressed in the standard form for the Method of Averaging~\cite[Sec. 10.4]{khalil2002nonlinear}:
    \begin{equation}
        \frac{dq_j}{ds} = \varepsilon f(s, q_j), \quad \text{where} \quad f(s, q_j) = S_j(s)(1-q_j)^2 - \alpha q_j^2.
    \end{equation}
    The vector field $f(s, q_j)$ is $L$-periodic in $s$ and continuously differentiable with respect to $q_j$. According to averaging theory, as $\varepsilon \to 0$ (which corresponds to $u \to \infty$), the trajectory $q_j(s; \varepsilon)$ converges uniformly to the solution of the autonomous averaged system:
    \begin{equation}
        \begin{aligned}
            \frac{d\bar{q}_j}{ds} &= \varepsilon f_{\text{avg}}(\bar{q}_j), \\
            \text{where} \quad f_{\text{avg}}(\bar{q}_j) &= \frac{1}{L} \int_0^L f(s, \bar{q}_j) \, ds.
        \end{aligned}
    \end{equation}
    Evaluating the integral yields $f_{\text{avg}}(\bar{q}_j) = \bar{S}_j(1-\bar{q}_j)^2 - \alpha \bar{q}_j^2$. Solving for equilibrium by setting $f_{\text{avg}}(q_{j,\infty}) = 0$:
    \begin{equation}
        \begin{aligned}
            \bar{S}_j(1 - q_{j,\infty})^2 &= \alpha q_{j,\infty}^2 \\
            \implies \sqrt{\bar{S}_j}(1 - q_{j,\infty}) &= \sqrt{\alpha} q_{j,\infty}.
        \end{aligned}
    \end{equation}
    Solving for the unique positive root yields $q_{j,\infty} = \frac{\sqrt{\bar{S}_j}}{\sqrt{\bar{S}_j} + \sqrt{\alpha}}$. To ensure this equilibrium is valid, we verify its exponential stability by evaluating the Jacobian of the averaged vector field:
    \begin{equation}
        \left. \frac{\partial f_{\text{avg}}}{\partial q_j} \right|_{q_{j,\infty}} = -2\bar{S}_j(1-q_{j,\infty}) - 2\alpha q_{j,\infty}.
    \end{equation}
    Since $\bar{S}_j > 0$, $\alpha > 0$, and $q_{j,\infty} \in (0, 1)$, the Jacobian is strictly negative, guaranteeing exponential stability. Thus, by Theorem 10.4 of~\cite{khalil2002nonlinear}, the $L$-periodic steady-state solution $q_j(s; u)$ converges uniformly to $q_{j,\infty}$ as $u \to \infty$, rendering the time-averaged clarity equal to this limit.
\end{proof}

In summary, in this section, we have formulated the energy-optimal control problem as a finite-dimensional optimal control problem, derived necessary conditions for optimality using PMP, and established several theoretical properties of the optimal control law. These properties include the uniqueness of the pointwise Hamiltonian maximizer, continuity of the optimal speed trajectory, the influence of spatial heterogeneity on acceleration, and the existence of a high-speed saturation limit for clarity. These insights provide a foundation for understanding the structure of optimal solutions and inform subsequent numerical analyses and simulations.

\section{Numerical Validation on an Illustrative Example}\label{sec:numerical-validation}

In this section, we present a computational study to illustrate overall trends and analyze the sensitivity of the solver and results to the choice of discretization $N$. We consider a simplified scenario where the path is fixed and the spatial information distribution is known a priori. The objective is to compute the optimal speed profile that maximizes the time-averaged clarity accumulated along the path, subject to the energy dynamics and constraints characterized by the necessary conditions in Sec.~\ref{sec:energy-optimization}.

As shown in Fig.~\ref{fig:resolution-comparison}, we define a spatially varying information distribution along a $100\,\mathrm{m}$ path, featuring high-density regions centered at $30\,\mathrm{m}$ and $80\,\mathrm{m}$. We discretize the path into $N$ segments and parameterize the speed profile as piecewise constant over each segment. We then solve the resulting finite-dimensional non-linear programming (NLP) problem defined in~\eqref{eqn:eoc:ocp} using the IPOPT solver~\cite{wachter2006implementation}. Note that this piecewise constant formulation is a numerical parameterization choice for the NLP solver; as $N \to \infty$, the step-like profile converges to the continuous optimal speed trajectory.

Regarding the vehicle constraints, we adopt the energy dynamics associated with the SeaTrac SP-48 ASV~\cite{seatrac}:
\begin{equation}\label{eq:nv:energy-dynamics}
    P_{\text{out}}(u(t)) = k_h + k_m u(t)^3,
\end{equation}
where $k_h = 10\,\mathrm{W}$ is the hotel load and $k_m = 83\,\mathrm{W \cdot (m/s)^{-3}}$ represents a lumped drag coefficient. The vehicle receives a constant average power input of $P_{\mathrm{in,avg}} = 750\,\mathrm{W}$, which is representative of the average power output from the solar panels under typical conditions.

For this implementation and hardware setup, computation time grows super-linearly with the number of segments, with the objective converging around $N = 50$ points. As shown in Fig.~\ref{fig:resolution-comparison}, the optimal speed profile consistently exhibits the behavior predicted in Section~\ref{sec:energy-optimization}: the vehicle slows down in high-information-density regions ($30\,\mathrm{m}$ and $80\,\mathrm{m}$) and accelerates through low-information regions. The optimization determines exactly how much the vehicle should accelerate and decelerate in these regions, based on the information-based objective functional.

\begin{figure}[htbp]
    \centering
    \includegraphics[width=\columnwidth]{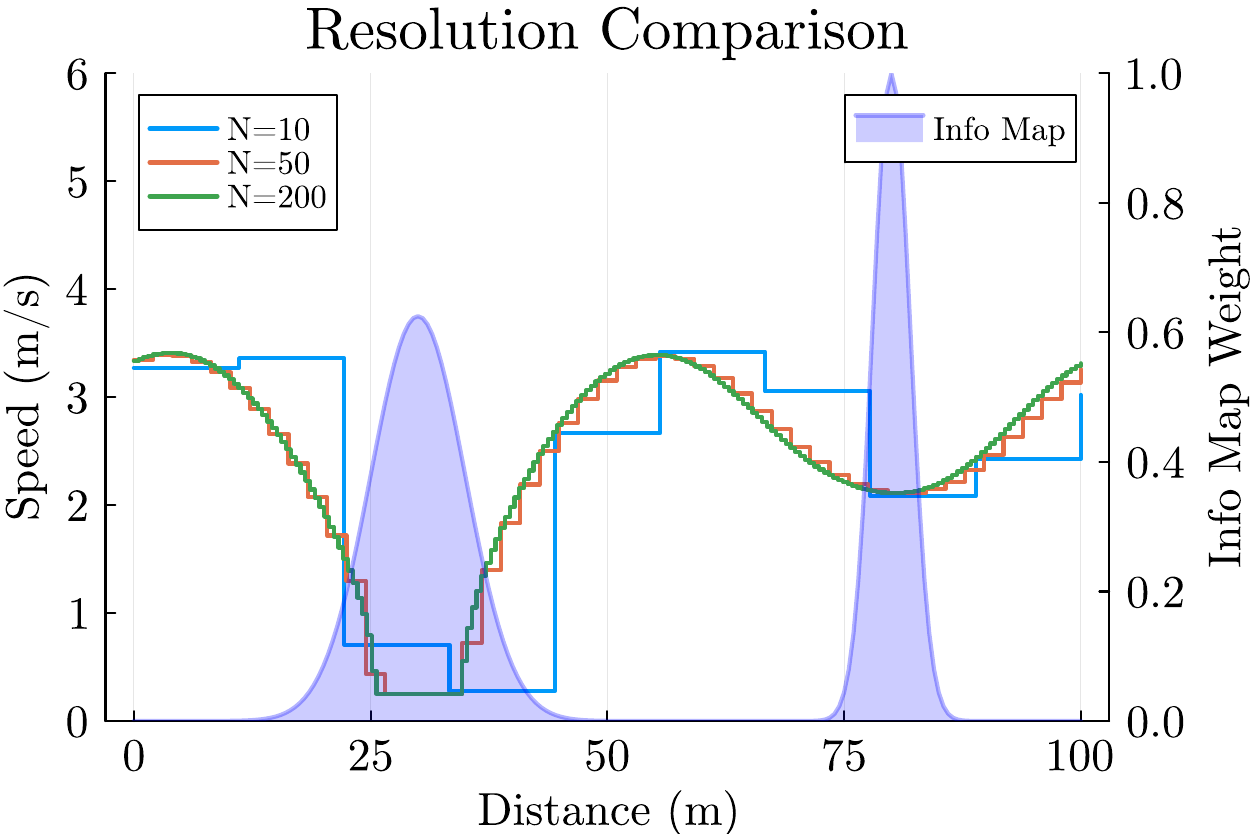}
    \caption{Optimal speed profile along the path for varying discretization levels ($N$). The background heatmap indicates the spatial information distribution, with high-density regions centered at 30 m and 80 m.}
    \label{fig:resolution-comparison}
\end{figure}

\section{Chesapeake Bay Simulation Study}\label{sec:simulation-study}
To demonstrate the applicability of our proposed method to a realistic observational mission, we consider a solar-powered autonomous surface vehicle (ASV) tasked with monitoring salinity along a fixed closed path in the lower Chesapeake Bay. This region exhibits substantial spatial salinity variability due to mixing of freshwater from the bay and saltwater from the ocean, making it a practically relevant monitoring application (see~\cite{stone2019long},~\cite{beegle2013nutrient}).

\subsection{Simulation Setup}
We use salinity data sourced from the NOAA Chesapeake Bay Operational Forecast System~\cite{noaa_cbofs_salinity} to represent the spatiotemporally varying salinity field. We follow an approximately 150 km transect across the mouth of the bay, shown over representative salinity fields in Fig.~\ref{fig:salinity_snapshots}.

The ASV follows the same energy dynamics described in Eq.~\eqref{eq:nv:energy-dynamics}, with an average power input of $P_{\mathrm{in,avg}} = 750\,\mathrm{W}$. The sensor, as described in Eq.~\eqref{eqn:sensing_model}, has a sensing rate $S_0 = 0.005$ and length scale $\sigma=1500$ meters. The path is discretized into $N = 200$ equally sized segments. The ASV completes multiple laps along this path. At the end of each lap, the salinity measurements collected during that traversal are used to update the target-weight profile, and an optimal speed profile for the subsequent lap is obtained by solving Eq.~\eqref{eqn:eoc:ocp}. For comparison, we also simulate a baseline planner that allocates energy uniformly across the path, resulting in a constant speed profile.

We define the weight function based on the salinity measurements, where $\zeta(s)$ denote the salinity (PSU) measured at path location $s$. The weight function is defined from salinity relative to a reference value as:
\begin{equation}
    \gamma(\zeta) = \begin{cases}
        \exp\left(- \frac{(\zeta - \zeta_{\mathrm{ref}})^2}{2\sigma_{\zeta}^2}\right) & \text{if } \zeta \leq \zeta_{\mathrm{ref}} \\
        1 & \text{if } \zeta > \zeta_{\mathrm{ref}}
    \end{cases}
    \label{eq:weight-function}
\end{equation}
where $\sigma_{\zeta}$ is the weight-function spread and $\zeta_{\mathrm{ref}}=30\,\mathrm{PSU}$ is the salinity target. We evaluate $\sigma_{\zeta}\in\{2,3,4\}$ PSU; smaller values produce more concentrated target weights and larger speed variations.

\begin{figure*}
    \centering
    \begin{minipage}[b]{0.30\textwidth}
        \centering
        \includegraphics[width=\textwidth]{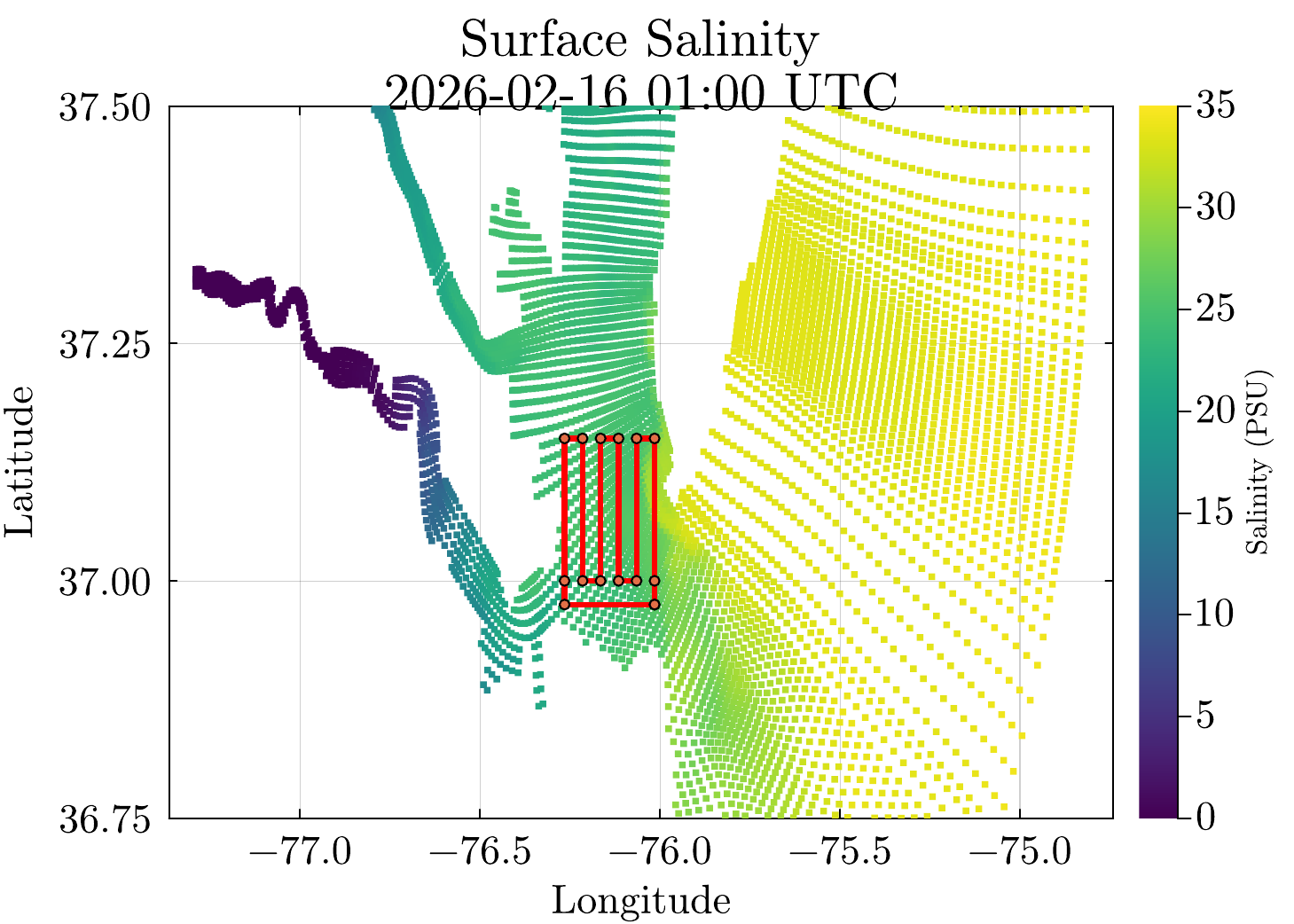}
    \end{minipage}
    \hfill
    \begin{minipage}[b]{0.30\textwidth}
        \centering
        \includegraphics[width=\textwidth]{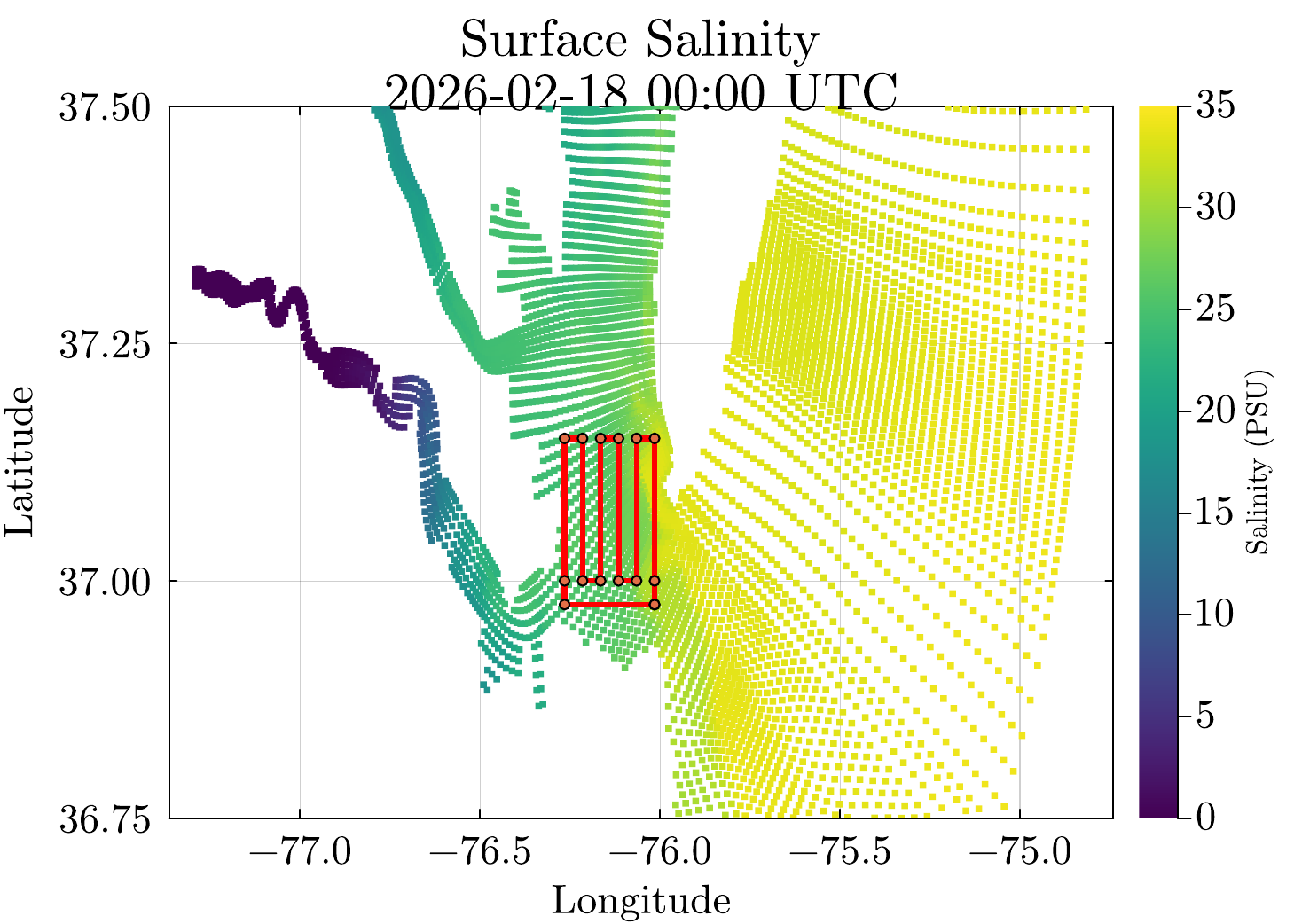}
    \end{minipage}
    \hfill
    \begin{minipage}[b]{0.30\textwidth}
        \centering
        \includegraphics[width=\textwidth]{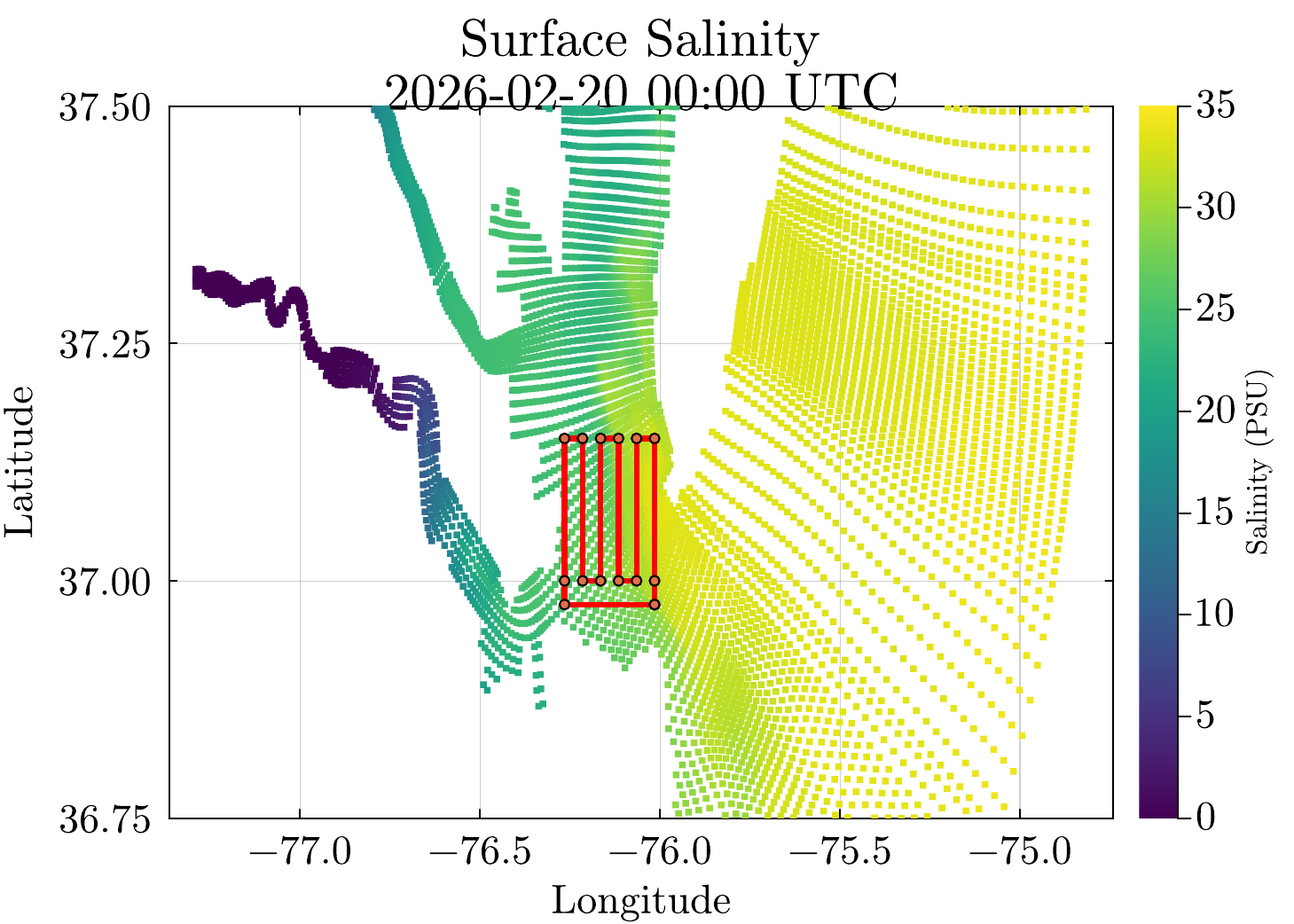}
    \end{minipage}
    \caption{Chesapeake Bay salinity snapshots and the monitored transect.}
    \label{fig:salinity_snapshots}
\end{figure*}

\subsection{Simulation Results}
Table~\ref{tab:simulation-results} compares IVE with a constant-speed baseline under each common weight function. The relative clarity improvement is therefore most appropriately interpreted within a row, since changing $\sigma_{\zeta}$ changes the monitoring objective itself. The most concentrated weighting, $\sigma_{\zeta}=2$, produces the largest improvement in total clarity ($59.5\%$). Increasing the spread to $\sigma_{\zeta}=3$ still provides a substantial $19.6\%$ improvement, and moves the mean salinity measurement closer to the 30 PSU reference ($-1.43$ PSU versus $-2.58$ PSU for the constant-speed strategy). This improvement is accompanied by a larger standard deviation, indicating that the vehicle samples a broader range of salinity values while preferentially allocating time to the target region. The diffuse $\sigma_{\zeta}=4$ case produces a more modest $2.0\%$ gain and measurement statistics closer to those of constant-speed traversal.

The speed profiles in Fig.~\ref{fig:ive_speed_profile} illustrate how the target-weight decay controls the allocation. In each case, the optimizer slows the ASV in high-weight regions and accelerates through locations that are less relevant to the monitoring objective. Smaller values of $\sigma_{\zeta}$ concentrate the reward near the reference salinity and consequently produce stronger variations in speed. In particular, the $\sigma_{\zeta}=2$ and $\sigma_{\zeta}=3$ profiles include near-stationary portions of the trajectory, reflecting the value of spending additional sensing time in localized high-priority regions. As the target weight becomes more diffuse, the allocation becomes smoother: the $\sigma_{\zeta}=4$ profile remains close to the constant-speed baseline while retaining a small clarity advantage. Figure~\ref{fig:ive_speed_time} shows the same behavior over simulation time and highlights that the allocation is recomputed as the observed salinity field changes from lap to lap.

\begin{figure}
    \centering
    \includegraphics[width=\columnwidth]{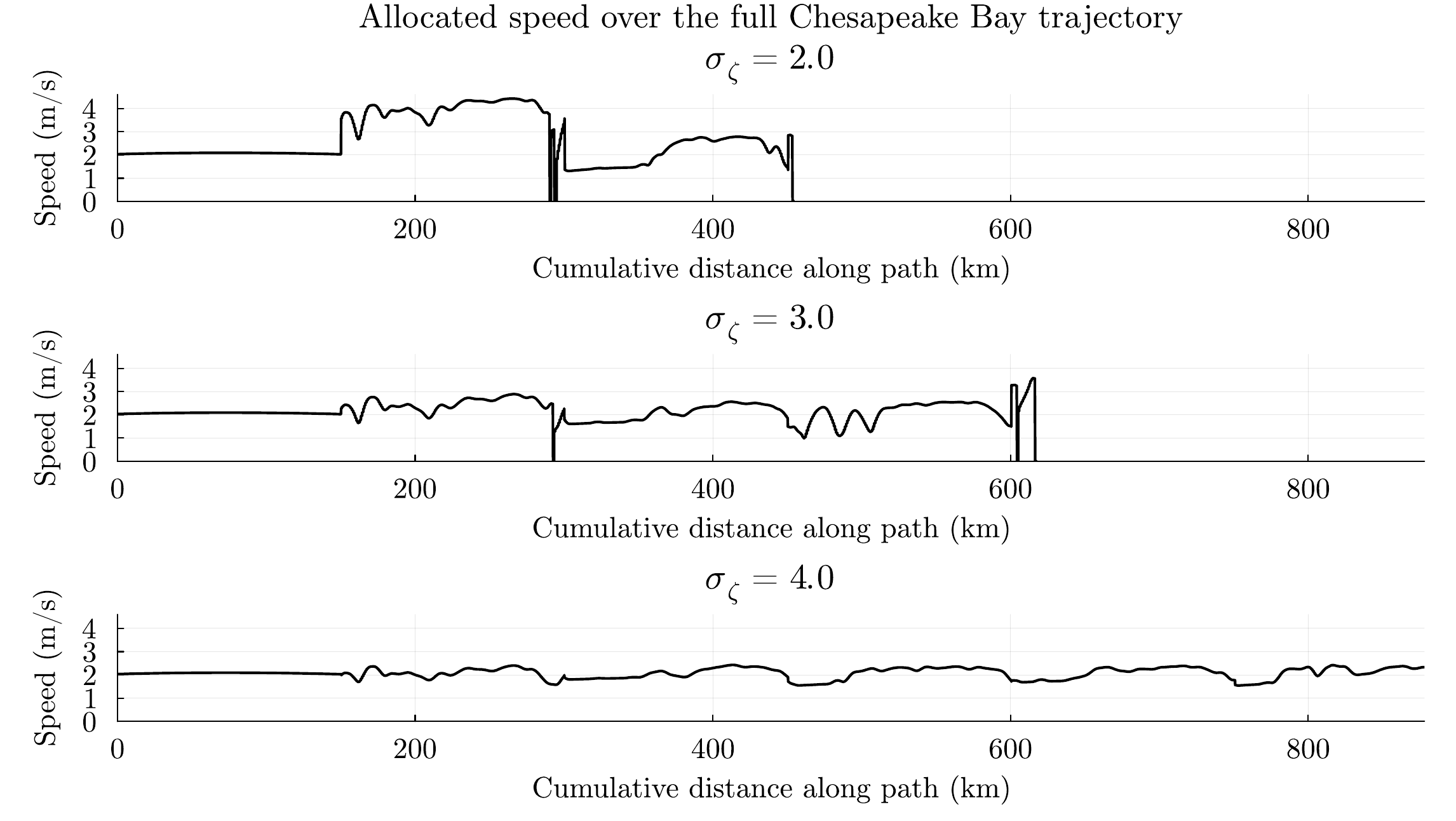}
    \caption{Allocated speed over the complete five-day Chesapeake Bay trajectory, plotted against cumulative distance along the path. Each panel uses the same horizontal scale and includes every lap.}
    \label{fig:ive_speed_profile}
\end{figure}

\begin{figure}
    \centering
    \includegraphics[width=\columnwidth]{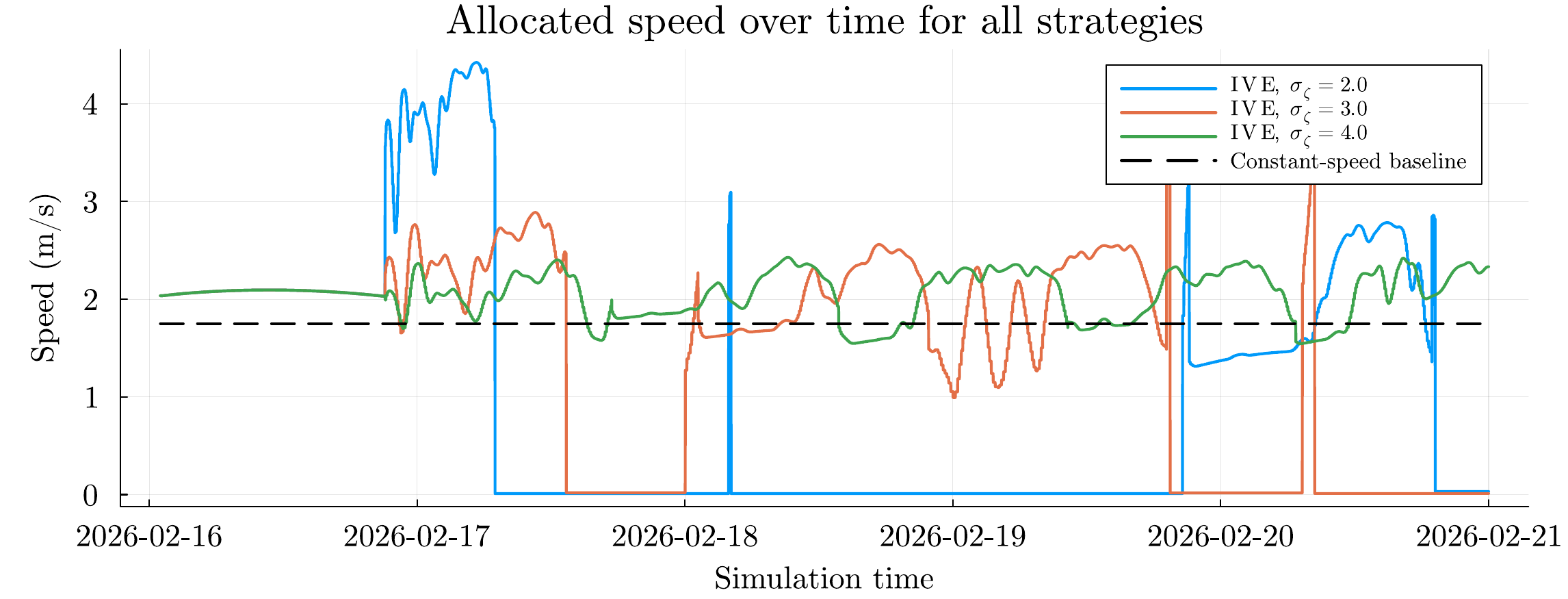}
    \caption{Allocated speed over time for the three IVE target-weight decay settings and the constant-speed baseline.}
    \label{fig:ive_speed_time}
\end{figure}

\begin{table}[htbp]
    \centering
    \caption{Chesapeake Bay simulation results. (IVE/Constant Speed)}
    \label{tab:simulation-results}
    \scriptsize
    \setlength{\tabcolsep}{0pt}
    \begin{tabular}{|c|c|c|c|c|}
        \hline
        $\sigma_{\zeta}$ & \textbf{Clarity (IVE / Constant Speed)} & \textbf{Gain} & \textbf{Mean deviation (PSU)} & \textbf{Std. dev. (PSU)} \\ \hline
        2.0 & 44,528.41 / 27,911.94 & 59.5\% & $-2.54$ / $-2.58$ & 2.10 / 2.34 \\ \hline
        3.0 & 42,761.25 / 35,764.20 & 19.6\% & $-1.43$ / $-2.58$ & 3.03 / 2.34 \\ \hline
        4.0 & 42,234.46 / 41,404.75 & 2.0\% & $-2.38$ / $-2.58$ & 2.65 / 2.34 \\ \hline
    \end{tabular}
\end{table}

\section{Conclusions and Future Work}\label{sec:conclusion}
This paper established a theoretical foundation for persistent sensing in sustainably powered mobile robots. We derived necessary conditions for periodic speed trajectory optimization, establishing analytical results for both spatially varying and uniform weighting functions alongside a high-speed information limit. Numerical optimizations and a Chesapeake Bay case study demonstrated that the optimal strategy decelerates the vehicle in high-importance regions and accelerates through lower-weighted zones. The results show that Information Value of Energy (IVE)-based speed adaptation achieves significantly higher weighted clarity than a uniform-speed baseline under identical power constraints.

Future work will extend this framework to joint path-and-speed co-optimization, receding-horizon control in dynamic environments, and experimental field validation using the SeaTrac SP-48 autonomous surface vehicle (ASV).

\bibliographystyle{IEEEtran}
\bibliography{references.bib}

\end{document}